\documentclass[11pt]{article}

\usepackage{fullpage}
\usepackage{natbib}
\usepackage{algorithm}
\usepackage{algorithmic}

\usepackage{amsthm}
\usepackage{amssymb}

\newtheorem{theorem}{Theorem}
\newtheorem{lemma}{Lemma}

\newcommand{{\ours}}{FedNew}

\usepackage{microtype}
\usepackage{graphicx}
\usepackage{amsfonts}
\usepackage{amsmath}
\usepackage{subcaption}
\usepackage{booktabs} 
\usepackage{xcolor}
\usepackage{hyperref}
\usepackage{hyperref}

\usepackage{bm}


\newcommand{\R}{\mathbb{R}}
\newcommand{\eqdef}{\; { := }\;}

\DeclareMathOperator*{\argmin}{arg\,min}

\title{\bf {\ours}: A Communication-Efficient and Privacy-Preserving  Newton-Type Method for Federated Learning}

\author{Anis Elgabli\thanks{University of Oulu, correspondent author: anis.elgabli@oulu.fi}, Chaouki Ben Issaid\thanks{University of Oulu}, Amrit S. Bedi\thanks{University of Marylad, College Park, MD, USA.},\\ Ketan Rajawat\thanks{Indian Institute of Technology Kanpur, India.}, Mehdi Bennis\thanks{University of Oulu}, and Vaneet~Aggarwal\thanks{School of Industrial Engineering and School of Electrical and Computer Engineering, Purdue University, USA.} 
}

\begin{document}

\maketitle

\begin{abstract}
Newton-type methods are popular in federated learning due to their fast convergence. Still, they suffer from two main issues, namely: \emph{low communication efficiency} and \emph{low privacy} due to the requirement of sending Hessian information from clients to parameter server (PS). In this work, we introduced a novel framework called FedNew in which there is no need to transmit Hessian information from clients to PS, hence resolving the bottleneck to improve communication efficiency. In addition, FedNew hides the gradient information and results in a privacy-preserving approach compared to the existing state-of-the-art. 
The core novel idea in {\ours} is to introduce a two level framework, and alternate between updating the inverse Hessian-gradient product using only one alternating direction method of multipliers (ADMM) step and then performing the global model update using Newton’s method. Though only one ADMM pass is used to approximate the inverse Hessian-gradient product at each iteration, we develop a novel theoretical approach to show the converging behavior of {\ours} for convex problems. Additionally, a significant reduction in communication overhead is achieved by utilizing stochastic quantization. Numerical results using real datasets show the superiority of FedNew compared to existing methods in terms of communication costs.
\end{abstract}

\section{Introduction} \label{introduction}
In this paper, we consider the following {\em federated learning} (FL) problem
\begin{align}\label{erm-prob}
\min\limits_{x\in \mathbb{R}^d} f(x) \eqdef \frac{1}{n}\sum \limits_{i=1}^n f_i(x),
\end{align}
where $d$ denotes the dimension of the model $x\in\R^d$ we wish to train, $n$ is the total number of clients/clients in the system, $f_i(x)$ is a convex loss/risk associated with data stored on client $i\in[n] \eqdef \{1, 2, \dots, n\}$, and $f(x)$ is the empirical loss/risk.
While first-order methods for FL problems are extensively studied in the literature, recently a handful of second-order methods have been proposed for FL problems. The main advantage of second-order methods is their faster convergence rate, although they suffer from high communication costs. Besides that, sharing the first and second-order information (which contains client's data) may create a privacy issue. This is because the Hessian matrix contains valuable information about the properties of the local function/data, and when shared with the parameter server (PS), privacy may be violated by eavesdroppers or an honest-but-curious PS. For instance, authors in \citep{Yin2014} show that the eigenvalues of the Hessian matrix can be used to extract important information of the input images.

In this work, we are interested in developing communication efficient  second-order methods while still preserving the privacy of the individual clients. To this end, we propose a novel framework that hides the gradients as well as the Hessians of the local functions, yet uses second-order information to solve the FL problem. In particular, we divide the standard Newton step into \emph{outer} and \emph{inner} levels. The objective of the inner level is to learn what we call the {\em Hessian inverse-gradient product} or {\em Zeroth Hessian inverse-gradient product} for the computation-efficient version (i.e., $(\nabla^2 f(x^k))^{-1} \nabla f(x^k)$ or $(\nabla^2 f(x^0))^{-1} \nabla f(x^k)$) without sharing individual gradients and/or Hessians. We use the ADMM method to solve the inner level problem due to their faster convergence as compared to gradient descent (GD) methods. Specifically, one ADMM step is used to approximate the solution of the inner level problem at each iteration. The solution of the inner level is used to perform the  outer level update, which is similar to Newton's method. The proposed approach ensures privacy and communication efficiency. Next, we summarize the state-of-the-art of first and second-order FL methods. 
\subsection{First-order FL Methods}
The {\em de-facto} solution to problem \eqref{erm-prob} is to use distributed (stochastic) gradient descent method (DGD/DSGD). In the $k$-th iteration of DGD, the PS shares $x^k$ with all clients, each client computes its local gradient with respect to $x^k$ and transmits it to the PS which aggregates them to form the global gradient and perform one GD step as follows.
\begin{align}\label{defacto_sol}
x^{k+1} =x^k - \alpha \nabla f(x^k) = x^k  -  \frac{\alpha}{n}\sum_{i=1}^n\nabla f_i(x^k).
\end{align}
 Throughout this paper, we will refer to  FL with DGD as FedGD. To enable communication-efficient FL, several techniques were proposed for enhancing communication efficiency, which is a key bottleneck in the FL. These techniques include {\em model compression/quantization} \citep{FEDLEARN, QSGD, DCGD, QGADMM},  {\em local computation} utilizing schemes such as local SGD \citep{FL2017-AISTATS, Stich-localSGD, localSGD-AISTATS2020, FedRR},  and  {\em censoring or partial participation} \citep{FL2017-AISTATS, SGD-AS, LAG}. Moreover, solutions that utilize a combination of the above techniques (e.g., censoring and quantization) were also investigated \citep{LAQ, CQGADMM}. Other relevant techniques for further reducing the communication cost of FL methods include the use of {\em momentum} \citep{DIANA, ADIANA}, and {\em adaptive learning rates} \citep{MM2019, xie2019local, reddi2020adaptive, IntSGD}. 	 
\subsection{Towards Second-order FL Methods}
First-order FL methods suffer from slow convergence speed in terms of the number of iterations/communication rounds. Moreover, their convergence speed is a function of the condition number, which depends on: (i) the structure of the model being trained, (ii) the choice of loss function, and (iii) the distribution of training data. On the other hand, second-order methods are known to be much faster owing to the fact that they make an extra computational effort to estimate the local curvature of the loss landscape, which is useful in generating faster and more adaptive update directions. Therefore, second-order methods perform more computations per communication round to achieve less number of communication rounds. Since in FL, it is often communication and not computation that is the key bottleneck, second-order methods are becoming attractive and have recently gained attention. The Newton's method, which forms the basis for most efficient second-order methods,  enjoys a fast {\em condition-number-independent} (local) convergence rate \citep{Beck-book-nonlinear}, is given by
\begin{align}\label{newtonUpdate} 
x^{k+1}
&=x^k-\Big(\sum_{i=1}^n\nabla^2 f_i(x^k)\Big)^{-1} \sum_{i=1}^n\nabla f_i(x^k).
\end{align}
However, fewer number of communication rounds does not necessarily induce a lower communication cost. Communication overhead is also affected by the number of transmitted bits between the clients and PS, which depends on the size of the transmitted vector/matrix per communication round. Hence, a direct application of Newton's method does not induce an efficient distributed implementation as it requires repeated communication of the local Hessian matrices $\nabla^2 f_i \in \R^{d\times d}$ to the server. This is prohibitive and constitutes a massive number of transmitted bits,  which requires high communication energy and bandwidth. Another important concern when implementing Newton's method is privacy, as it requires sharing both the gradient and the Hessian at each iteration, which makes it vulnerable to inversion {attacks} \cite{fredrikson2015model,hitaj2017deep}. For instance, in a linear regression task, the Hessian matrix is nothing but $D^TD$, where $D$ is the data matrix which results in privacy issues for each client. 


Recently, \cite{FedNL} proposed a Newton-based framework that avoids communicating  the full Hessian matrix at each iteration. The idea is to share a compressed version of the Hessian matrix utilizing compression techniques such as top-K, and Rank-R approximation. While this approach can reduce the communication cost, it has a number of shortcomings: (i) it does not solve the privacy issue since every client still shares the gradient and a compressed version of the Hessian matrix, (ii) it introduces further computations to perform compression, and (iii) it may not lead to high communication savings when the rank of the Hessian matrix is high. It is also worth mentioning that this approach, as well as the standard Newton's method, require matrix inversion at the PS at each iteration. On the other hand, even though the Newton Zero algorithm (proposed in \cite{FedNL})  requires matrix inversion only at the first iteration, it still shares the full Hessian matrix at the initial iteration, which necessitates $\mathcal{O}(d^2)$ communication cost at the first iteration, besides overlooking privacy. Moreover, in contrast to first-order FL methods which rely on simple aggregation at the PS, existing Newton-based methods require matrix inversion and multiplication (at least for the first iteration) which restricts the use of quantization to Hessian and gradient at the same time. 

To obviate the above-mentioned limitations, we propose {\ours}, a novel and efficient framework that ensures privacy by hiding the Hessian and the gradient. i.e., neither gradient nor Hessian is sent directly. In addition, {\ours} reduces the communication cost compared to standard second-order methods by transmitting only $\mathcal{O}(d)$ information at each iteration, similar to first-order methods. Furthermore, {\ours} reduces the computation cost when utilizing the zeroth Hessian at each iteration (similar to Newton Zero), since it performs matrix inversion only once (at the first iteration). Finally, to further reduce communication overhead, we propose a variant, coined Q-{\ours}, which quantizes the transmitted variable. 
 %
%
We summarize our contributions as follows.
 
(1) We propose a novel framework to solve the FL problem using second-order information. In particular, we decompose the objective function of the problem of learning the {\em Hessian inverse-gradient product} into a sum of separable functions and solve it in a distributed way. The framework alternates between updating the {\em Hessian inverse-gradient product} and the global model. To the best of our knowledge, this is the first work that utilizes one step of the ADMM method to estimate Newton directions with convergence guarantees. 
 
(2) The proposed {\ours}  algorithm is rooted in a communication-efficient and privacy-preserving way to estimate the {\em Hessian inverse-gradient product} at each iteration. In contrast to existing approaches, {\ours} does not require clients to share their gradient and the Hessian matrix (or compressed Hessian matrix) at any iteration, including the first one; hence $\mathcal{O}(d)$ communication cost is guaranteed at each iteration including the first one. 
 
(3) We prove that the proposed FedNew algorithm asymptotically converges to the optimal direction of Newton method. We develop a novel proof technique to show that the proposed ADMM based inner and outer level iterates interacts with each other and shows converging behavior under some assumptions (cf. Sec. \ref{convergence}). Moreover, we provide a privacy analysis of FedNew and show that the reconstruction of gradients/Hessians is not possible (cf. Sec. \ref{privacy}).

(4) To further reduce the communication cost per iteration, we leverage stochastic quantization and propose quantized version called Q-FedNew. It was possible due to the unique feature of FedNew where clients share only a vector with PS that is not involved in any further multiplication/inversion at the PS (cf. Sec. \ref{squantization}).
 
 
 We conduct extensive simulations on real datasets and show the performance gain of the proposed  {\ours} and Q-{\ours} algorithms in comparison to Newton Zero and FedGD in terms of the number of transmitted bits and number of communication rounds.
 
 
 \section{Proposed Framework: {\ours}}
We start by writing the Newton update step for the global model $x$ at iteration $k$ as
\begin{equation}\label{newtonUpdate}
x^{k+1}=x^k-(\nabla^2 f(x^k))^{-1} \nabla f(x^k),
\end{equation}
where $f$ (the empirical loss function) is assumed to be a continuously differentiable function over $x\in \mathbb{R}^d$. 
A key observation is that the direction $z(x^k)$$:=$$(\nabla^2 f(x^k))^{-1} \nabla f(x^k)$ is the solution of the following problem
\begin{align}\label{innerProb}
z(x^k) = \argmin_{y \in \mathbb{R}^d}  \frac{1}{2}y^T \nabla^2 f({x^k}) y - y^T \nabla f(x^k).
\end{align}
Note that the problem in \eqref{innerProb} is an unconstrained convex optimization problem that can be solved at iteration $k$ to obtain the optimal direction $z(x^k)$. To solve this problem, one could utilize any existing iterative solver and obtain a direction that is close to the optimal $z(x^k)$ for a given $x^k$. 
Utilizing the  solution of \eqref{innerProb}, we can then update $x$ as, $x^{k+1}=x^k-z(x^k)$ which is a one Newton step from $x^{k}$. 
However, the main issue here is that solving \eqref{innerProb} iteratively introduces a double loop, which can be communication and computation expensive. To address this issue, we propose a two-level framework detailed next. 

\subsection{Inner Level: One Pass ADMM}
We note that solving \eqref{innerProb} completely at each iteration $k$ exhibits three challenges: (i) it requires sharing local Hessians and gradients with the PS which is communication expensive, (ii) it creates privacy issues since clients share their gradients and Hessians, and (iii) it requires matrix inversion at the PS at each iteration. We will explain how to overcome the limitation (iii) later in this section. Let us first start by addressing the limitations in (i) and (ii), by reformulating the problem as follows
\begin{align}
&\underset{y_i, y}{\min} \quad \frac{1}{n}\sum_{i=1}^n \Big(\frac{1}{2}y_i^T (\nabla^2 f_i(x^{k} )+\alpha I) y_i - y_i^T \nabla f_i(x^k )\Big) \nonumber \\
&\quad\text{s.t. }  \quad y_i = y, \quad \forall i \in [n],
\label{decentralizedAdmm}
\end{align}
where we introduce a tuning parameter $\alpha$. The problem in \eqref{decentralizedAdmm} is a consensus reformulation of the problem in \eqref{innerProb} where $y_i$ denotes the local directions, and $y$ denotes the global direction. Note that the direction $-[\nabla^2 f(x^k)+\alpha I]^{-1} \nabla f(x^k)${, the inexact Newton direction,} is also a descent direction \cite{marteau2019globally, karimireddy2018global, mishchenko2021regularized,zhang2021newton} which boils down to Levenberg-Marquardt algorithm for least square problems which exhibits global convergence \cite{levenberg1944method,marquardt1963algorithm,bergou2020convergence}. 
By reformulating \eqref{innerProb} as \eqref{decentralizedAdmm}, the objective function becomes separable across clients, which allows the solution to be distributed and avoid clients sharing their Hessians/gradients at each iteration. We leverage ADMM algorithm to solve the problem in \eqref{decentralizedAdmm} in a distributed manner. 

To simplify the notation, from now on, we will use $H_i^{k}$ for Hessian $\nabla^2 f_i(x^k)$ and $g_i^k$ for gradient $\nabla f_i(x^k)$ of client $i$ at iteration $k$. With that, the augmented Lagrangian of the optimization problem \eqref{decentralizedAdmm} can be written as,
\begin{align}
 \mathcal{L}_{\rho}\left(\{y_i\}_{i=1}^n, {y}, \lambda \right)\!=\!&\sum_{i=1}^n \Big(\frac{1}{2}y_i^T (H_i^k\!+\!\alpha I)  y_i - y_i^T g_i^k\Big) + \sum_{i=1}^{n} \langle \lambda_i, y_{i} - y \rangle+\frac{\rho}{2}\sum_{i=1}^{n} \|y_{i} - y\|_2^2,
\label{augmentedLag4}
\end{align}
where $\lambda = \{\lambda_i\}_{i=1}^n$ is the collection of dual variables and $\rho > 0$ is a constant penalty parameter. One ADMM pass is performed at each iteration $k$. Hence, the primal and the dual variables are updated as follows.
\begin{itemize}
\item[$(1)$] Each client $i$ updates its primal variable $y_i^k$ by solving the following problem,
\begin{align}\label{updateLocal}
y_i^k\!=\!\underset{y_i}{\arg \min}~ \Big\{&\frac{1}{2}{y_i}^T (H_i^k+\alpha I) y_i \!+\!\!\langle \lambda_i^{k-1}\!, y_i\!-\!y^{k-1} \rangle-\! {y_i}^T g_i^k\!\!+\frac{\rho}{2} \|y_i - y^{k-1}\|_2^2\Big\},
\end{align}
which gives the solution, 
\begin{align}\label{updateLocal2}
y_i^k\!=\! (H_i^k+\alpha I+\rho I)^{-1}(g_i^k-\lambda_i^{k-1}+\rho y^{k-1}).
\end{align}
Then, every client transmits its updated local variable $y_i^k$ to the PS.

\item[$(2)$] The primal variable of the PS is updated by solving the following problem
\begin{align}\label{updatePS}
y^{k}\!=\!\underset{y}{\arg \min}~ \Big\{&\sum_{i=1}^n\langle \lambda_i^{k-1}, y_i^{k} - y \rangle \!\!+\!\!\frac{\rho}{2}\sum_{i=1}^n \|y_i^k - y\|_2^2\Big\},
\end{align}
Which gives the solution,
\begin{align}\label{updatePS2}
y^k\!&=\! \frac{1}{n} \sum_{i=1}^n (y_i^k+\lambda_i^{k-1}/\rho).
\end{align}
Once the global variable $y^k$ is updated at the PS, it will be shared with all clients.
\item[$(3)$] The dual variables are updated locally for each client
\begin{align}\label{updateDual}
\lambda_i^k= \lambda_i^{k-1} + \rho (y_i^k-y^k).
\end{align}
\end{itemize}
From \eqref{updatePS2} and \eqref{updateDual}, we know that $\sum_{i=1}^n \lambda_i^{k} = 0, \forall k$, hence \eqref{updatePS2} can be written as
\begin{align}\label{updatePS3}
y^k\!&=\frac{1}{n} \sum_{i=1}^n y_i^k.
\end{align}
Therefore, the global variable $y^k$ is just the average of the local variables $y_i^k$, for all $i \in [n]$.
\subsection{Outer Level: Approximate Newton Step}
At the outer level, after calculating the global direction $y^k$ from \eqref{updatePS3}, we perform the following update 
\begin{align}\label{newtonOurs}
x^{k+1}=x^k- y^k,
\end{align}
where $y^k$ is an approximation to the optimal direction $y^*(x^k)$. 
We note that the closed-form expression in \eqref{updateLocal2} still involves an inversion of the local Hessian $H_i^{k}$, which can be avoided by using $H_i^{0}$ instead of $H_i^{k}$ at each iteration. To summarize, at iteration $k$, given $g_i^{k}$ and $H_i^{k}$ ($H_i^{0}$ if we avoid inversion), each client updates and transmits $y_i^{k}$. Then the PS updates $y^{k}$, and performs a one Newton step before sharing the updated model and $y^{k}$ with all the clients. Finally, each client updates the dual variables via \eqref{updateDual}. The detailed steps of the algorithm are summarized in Algorithm \ref{alg:FedNew}. Using ADMM to solve the inner level optimization problem introduces a new set of variables (local primal and dual variables), and by performing only one ADMM pass each time, the subproblems are not solved optimally, which may affect the convergence of the outer iterate $x^k$ in our framework. Therefore, it becomes essential to study the convergence behavior of the inner iterates so that outer iterates eventually follows the newton directions only. Proving convergence for such a combination between a primal based method (Newton method) for the outer problem and a primal-dual based method with one pass only each iteration  (one-pass ADMM method) for the inner problem is very challenging. However, in the next section, we provide convergence analysis of the proposed framework and show that convergence holds under some assumptions and conditions.   
\begin{algorithm}[t]
\caption{{\ours} (\textbf{Fed}erated \textbf{New}ton) }
\label{alg:FedNew}
\begin{algorithmic}[1]
\STATE \textbf{Parameters:} $K$; $\rho$, $\alpha$ 
\STATE \textbf{Initialization:} $x^0, y^0, \{y_i^0\}_{i=1}^n, \{\lambda_i^0\}_{i=1}^n \in\R^d$
\STATE $k \leftarrow 0$
\WHILE{$k < K$}
\STATE \textbf{on all clients:} compute local gradient $ g_i^k $ and local Hessian $ H_i^k $.
\STATE \textbf{on all clients:} update $y_i^k$ using \eqref{updateLocal2} and send to the PS
\STATE \textbf{on the PS:} update $\boldsymbol{y}^k$ using \eqref{updatePS3} and $x^k$ using \eqref{newtonOurs} and transmit them to all clients
\STATE  \textbf{on all clients:} update $\lambda_i^k$ using \eqref{updateDual}
\ENDWHILE
\end{algorithmic}
\end{algorithm}
\section{Convergence Analysis}\label{convergence}
In this section, we study the convergence of the proposed FedNew algorithm. Note that we have introduced a two level framework to approximate the update in \eqref{newtonUpdate}. The solution of \eqref{innerProb} would result in the optimal direction, which we then use to update the model parameter at the PS. Since \eqref{innerProb} is strictly convex, there exists a unique optimal solution which we can write in closed form, but it requires that all clients transmit their hessian matrices to the PS, where the PS performs summation of the local Hessians, and then matrix inversion, which is costly, and we want to avoid. The path we took is to use an iterative distributed ADMM algorithm to solve the inner level problem in \eqref{decentralizedAdmm}. There are two possibilities, the first is that we solve the inner problem iteratively till convergence, and then utilize the solution for the Newton update. In this case, $x^{k}$ would trivially converge to $x^*$ for $\alpha=0$ following the existing analysis in the literature \cite{karimireddy2018global,marteau2019globally}, but it would add a lot of computational burden on the clients and is not very practical solution to aim for. In contrast, we propose to solve the inner problem via one step ADMM (where we just take one step of standard ADMM), and then perform the outer Newton update. Note that this approach is really effective for practical use, but introduces errors in the Newton directions to be used for the outer update. This makes the convergence proof challenging and does not hold straightforwardly from the existing proof of the ADMM or Newton based methods. To address this challenge, we develop a proof technique next to show that FedNew converges asymptotically.  

To analyze the convergence behavior of Algorithm \ref{alg:FedNew}, we assume that each function $f_i$ is twice differentiable, convex, has $L$-Lipschitz continuous gradient, and $L_{*}$-Lipschitz continuous Hessian. Let $Q_i(x,y)$ be equal to $\frac{1}{2}y_i^T (\nabla^2 f_i(x^{k} )+\alpha I) y_i - y_i^T \nabla f_i(x^k )$, with this definition, we assume that $\nabla Q$ is Lipschitz continuous in $y$ with constant $L_q$, i.e., for any given $x^1$, $x^2$ $\in \bm{X}$, we have
\begin{align}
    \|\nabla Q_i(x^1,y_i^1)-\nabla Q_i(x^2,y_i^2)\|^2\leq L_q \|y_i^1-y_i^2\|^2.
\end{align}
We start with the necessary and sufficient optimality conditions of the inner problem in \eqref{decentralizedAdmm} at the $k$-th iteration (the $k$-th Newton step), which are the primal and dual feasibility conditions \citep{boyd2011distributed} defined as
\begin{align}
&\!\!{y_i^\star}(x^k) \!=\! {y^\star}(x^k),  \quad\quad\quad\quad\quad\text{(primal feas.)} \label{primal_feasiblity} 
\\
&\!\!(H_i^k\!+\!\alpha I){y_i^\star}(x^k)\!-\!g_i^k \!+\! {\lambda_i^\star}(x^k) =0, \quad\text{(dual feas.)} \label{dual_feasibility}
\end{align}
{for all $i \in [n]$. In \eqref{primal_feasiblity}-\eqref{dual_feasibility}, ${y_i^\star}(x^k)$ and ${\lambda_i^\star}(x^k)$ denote the optimal values of $y_i^k$ and $\lambda_i^k$, respectively, at the $k^{th}$ iteration, i.e. when running the ADMM steps to the end. Note that if ${y_i^\star}(x^k)$ is computed then ${x}^{k+1}=x^k - {\frac{1}{n}\sum_{i=1}^n y_i^\star}^k$}. Next, we write the optimality conditions at the optimal model $x^*$ as 
%
%
%
\begin{align}\label{primal_feasiblity2}
y_i^\star =& y^\star = 0, \forall i \in [n]  &\!\!\!\!\!\text{(primal feas.)}
\\
\label{dual_feasibility2}
g_i^\star =& \lambda_{i}^\star, \forall i \in [n]&\!\!\!\!\!\!\text{(dual feas.)}
\end{align}
where $g_i^k$ becomes $g_i^\star$ in \eqref{dual_feasibility2}. The equality in \eqref{primal_feasiblity2} and \eqref{dual_feasibility2} follows from the fact that $y^\star=\nabla^2 [f({x}^{\star})+\alpha I]^{-1}\nabla f({x}^{\star})=0$ because $\nabla f({x}^{\star})=0$. Note that we drop the $k$ notation in \eqref{primal_feasiblity2}-\eqref{dual_feasibility2} because we write them for $x^*$ which is independent of $k$. Next, to simplify the analysis, we introduce a local copy $x_i$ of the model $x$ to write
\begin{align}\label{eq:a0}
x^{k+1} = \frac{1}{n}\sum_{i=1}^n x_i^{k+1} =  \frac{1}{n}\sum_{i=1}^n (x_i^k - y_i^k).
\end{align}
Re-writing \eqref{newtonOurs} as \eqref{eq:a0} does not change anything in the algorithm but simplifies the analysis. With this, we first introduce two intermediate lemmas (Lemma \ref{lemma_1} and Lemma \ref{lemma_2}) which would lead us to Theorem \ref{theorem_1}. 
\begin{lemma}\label{lemma_1}
In  Algorithm \ref{alg:FedNew} (FedNew), for each iteration $k$, it holds that
\begin{align}\label{eqo70}
\sum_{i=1}^n\langle \lambda_i^k + s^k-{\lambda_i^\star}^k, y_i^k-{y^\star}^k\rangle \leq -\alpha \sum_{i=1}^n\|y_i^k-{y^\star}^k\|^2,
\end{align}
where $s^k=\rho(y^{k}-y^{k-1})$ is the dual residual of the inner problem.
\end{lemma}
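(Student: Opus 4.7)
The plan is to get a clean expression for $\lambda_i^k + s^k$ by combining the first-order optimality condition of the local primal update \eqref{updateLocal} with the dual update \eqref{updateDual}, then subtract the optimal dual feasibility condition \eqref{dual_feasibility} and pair with $y_i^k - {y^\star}^k$.

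First I would write the KKT condition of the local subproblem \eqref{updateLocal}: differentiating in $y_i$ gives
\begin{equation*}
(H_i^k+\alpha I)y_i^k + \lambda_i^{k-1} - g_i^k + \rho(y_i^k - y^{k-1}) = 0.
\end{equation*}
Using \eqref{updateDual} to substitute $\lambda_i^{k-1} = \lambda_i^k - \rho(y_i^k - y^k)$, the $\rho y_i^k$ terms cancel and I obtain
\begin{equation*}
(H_i^k+\alpha I)y_i^k + \lambda_i^k + \rho(y^k - y^{k-1}) - g_i^k = 0,
\end{equation*}
i.e.\ $\lambda_i^k + s^k = g_i^k - (H_i^k+\alpha I)y_i^k$, which is the key identity since it packages the dual residual $s^k$ together with $\lambda_i^k$ into a form that matches the structure of the optimal dual feasibility \eqref{dual_feasibility}, namely ${\lambda_i^\star}^k = g_i^k - (H_i^k+\alpha I){y_i^\star}^k$.

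Subtracting these two relations gives
\begin{equation*}
(\lambda_i^k + s^k) - {\lambda_i^\star}^k = -(H_i^k+\alpha I)\bigl(y_i^k - {y_i^\star}^k\bigr).
\end{equation*}
Now I take the inner product of both sides with $y_i^k - {y^\star}^k$. By the primal feasibility \eqref{primal_feasiblity} at the inner optimum, ${y_i^\star}^k = {y^\star}^k$, so the right-hand side becomes $-\langle (H_i^k+\alpha I)(y_i^k - {y^\star}^k),\, y_i^k - {y^\star}^k\rangle$. Since $f_i$ is convex, $H_i^k \succeq 0$, hence $(H_i^k+\alpha I) \succeq \alpha I$, giving
\begin{equation*}
\langle \lambda_i^k + s^k - {\lambda_i^\star}^k,\, y_i^k - {y^\star}^k\rangle \le -\alpha \|y_i^k - {y^\star}^k\|^2.
\end{equation*}
Summing over $i \in [n]$ yields the claim. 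No step is really an obstacle here: the only slightly delicate point is recognizing that after the dual-update substitution the residual $\rho(y^k - y^{k-1})$ appears naturally alongside $\lambda_i^k$, which is exactly what makes the lemma clean and what motivates bundling $\lambda_i^k + s^k$ together in the statement.
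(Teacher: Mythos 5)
Your proposal is correct and follows essentially the same route as the paper's own proof: deriving the stationarity condition of \eqref{updateLocal}, substituting the dual update to isolate $\lambda_i^k + s^k = g_i^k - (H_i^k+\alpha I)y_i^k$, subtracting the dual feasibility relation \eqref{dual_feasibility}, and using positive semidefiniteness of $H_i^k$ together with ${y_i^\star}^k = {y^\star}^k$ before summing over clients. No substantive difference from the paper's argument.
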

%
%
%
%
The proof of Lemma \ref{lemma_1} is provided in Appendix \ref{proof_lemma_1}. The result in Lemma \ref{lemma_1} will be used in Lemma \ref{lemma_2} to impose an upper bound on the difference between the optimality gap of the inner problem at iteration $k$ and $k-1$.
\begin{lemma}\label{lemma_2}
In Algorithm \ref{alg:FedNew} (FedNew), for each iteration $k$, it holds that
\begin{align}\label{lemma2Result}
&\frac{1}{\rho} \sum_{i=1}^n \|\lambda_i^k- {\lambda_i^\star}^k\|^2+2\beta_1 \sum_{i=1}^n \|y_i^{k}-{y^\star}^k\|^2  + \rho n \|y^{k}-{y^\star}^k \|^2\nonumber\\
&+2\rho n \|y^{k} - y^{k-1}\|^2
\nonumber\\
&
 \leq \frac{1}{\rho} \sum_{i=1}^n \|\lambda_i^{k-1} -{\lambda_i^\star}^{k-1}\|^2+\frac{2 L_q^2}{\rho}\sum_{i=1}^n \|y_i^{k-1}-{y^\star}^{k-1}\|^2\nonumber\\
 &\quad +\frac{4 L_q^2n}{\rho}\|y^{k-1}-{y^\star}^{k-1}\|^2+2\rho n\|y^{k-1}-y^{k-2}\|^2\nonumber\\
 &\quad -2\beta_2 \sum_{i=1}^n \|y_i^{k}-{y^\star}^k\|^2,
\end{align}
where $\beta_1 > 0, \beta_2 > 0$, and 
\begin{align}\label{alphaCondition}
\beta_1+\beta_2 \leq \alpha - 2.5\rho - \frac{8L_q^2n}{\rho}.
\end{align}

\end{lemma}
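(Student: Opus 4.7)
The plan is to derive \eqref{lemma2Result} by combining the variational inequality from Lemma \ref{lemma_1} with the ADMM dual-update identity \eqref{updateDual}, and then controlling the drift of the inner optimal pair $({y^\star}^k, {\lambda_i^\star}^k)$ as the outer iterate moves from $x^{k-1}$ to $x^k$. The starting point is
\[
\sum_{i=1}^n \langle \lambda_i^k + s^k - {\lambda_i^\star}^k,\, y_i^k - {y^\star}^k\rangle \leq -\alpha \sum_{i=1}^n \|y_i^k - {y^\star}^k\|^2,
\]
together with $\lambda_i^k - \lambda_i^{k-1} = \rho(y_i^k - y^k)$ and $s^k = \rho(y^k - y^{k-1})$. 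The goal is to convert the inner products on the left into telescoping differences of squared norms that match the Lyapunov quantities of \eqref{lemma2Result}.

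First, I would expand the inner product by writing $\lambda_i^k - {\lambda_i^\star}^k = (\lambda_i^{k-1} - {\lambda_i^\star}^k) + \rho(y_i^k - y^k)$ and decomposing $y_i^k - {y^\star}^k = (y_i^k - y^k) + (y^k - {y^\star}^k)$. Summing over $i$ and using $\sum_i \lambda_i^k = \sum_i {\lambda_i^\star}^k = 0$ kills the mixed term $\langle \lambda_i^{k-1} - {\lambda_i^\star}^k,\, y^k - {y^\star}^k\rangle$, leaving an expression that, after one polarization identity applied to $\langle \lambda_i^{k-1} - {\lambda_i^\star}^k,\, \lambda_i^k - \lambda_i^{k-1}\rangle$, produces the key telescoping dual-gap term $\frac{1}{2\rho}\bigl(\|\lambda_i^k - {\lambda_i^\star}^k\|^2 - \|\lambda_i^{k-1} - {\lambda_i^\star}^k\|^2\bigr)$ plus a residual $\frac{\rho}{2}\|y_i^k - y^k\|^2$. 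The $s^k$ contribution reduces to $\rho n\langle y^k - y^{k-1},\, y^k - {y^\star}^k\rangle$, whose polarization gives $\frac{\rho n}{2}\bigl(\|y^k - {y^\star}^k\|^2 - \|y^{k-1} - {y^\star}^k\|^2 + \|y^k - y^{k-1}\|^2\bigr)$. Multiplying through by $2$ yields an inequality whose left-hand side already contains the Lyapunov quantities of \eqref{lemma2Result}, but whose right-hand side terms are still evaluated at the current optimum $({y^\star}^k, {\lambda_i^\star}^k)$.

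The main obstacle is to shift $({y^\star}^k, {\lambda_i^\star}^k)$ back to $({y^\star}^{k-1}, {\lambda_i^\star}^{k-1})$ on the right-hand side. I would do this by inserting $\pm {\lambda_i^\star}^{k-1}$ and $\pm {y^\star}^{k-1}$ inside the squared norms and applying Young's inequality with carefully chosen splitting parameters, producing the shifted terms $\|\lambda_i^{k-1} - {\lambda_i^\star}^{k-1}\|^2$ and $\|y^{k-1} - {y^\star}^{k-1}\|^2$ plus drift residuals $\|{\lambda_i^\star}^k - {\lambda_i^\star}^{k-1}\|^2$ and $\|{y^\star}^k - {y^\star}^{k-1}\|^2$. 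Subtracting the KKT conditions \eqref{dual_feasibility} at iterations $k$ and $k-1$ gives ${\lambda_i^\star}^{k-1} - {\lambda_i^\star}^k = \nabla Q_i(x^k, {y_i^\star}^k) - \nabla Q_i(x^{k-1}, {y_i^\star}^{k-1})$, and the $L_q$-Lipschitz assumption then bounds the dual drift by $L_q\|{y^\star}^k - {y^\star}^{k-1}\|^2$; further triangle-inequality bookkeeping expresses $\|{y^\star}^k - {y^\star}^{k-1}\|^2$ in terms of the previous-iteration gaps $\|y_i^{k-1} - {y^\star}^{k-1}\|^2$ and $\|y^{k-1} - {y^\star}^{k-1}\|^2$, giving the $\tfrac{2L_q^2}{\rho}$ and $\tfrac{4L_q^2 n}{\rho}$ coefficients that appear in \eqref{lemma2Result}.

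Finally, I would move the $-2\alpha \sum_i \|y_i^k - {y^\star}^k\|^2$ factor from Lemma \ref{lemma_1} to the left and absorb the leftover residuals into the Lyapunov quantities. The condition \eqref{alphaCondition}, $\beta_1 + \beta_2 \leq \alpha - 2.5\rho - 8L_q^2 n/\rho$, is tuned precisely so that the $2\alpha$ coefficient on the left dominates the combined slack from the polarization residuals (contributing the $2.5\rho$ piece) and from the Young-inequality splittings generated by the $L_q$-drift (contributing the $8L_q^2 n/\rho$ piece), leaving exactly $2\beta_1 \sum_i\|y_i^k - {y^\star}^k\|^2$ on the left and $-2\beta_2 \sum_i \|y_i^k - {y^\star}^k\|^2$ on the right. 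The chief difficulty I anticipate is the bookkeeping: three distinct sources of quadratic slack---dual-update residuals, primal consensus residuals, and $y^\star$-drift residuals---must be tracked and combined consistently without losing control of any cross-term, since each contributes to the admissible relationship between $\alpha$, $\rho$, and $L_q$ that ultimately makes the Lyapunov function monotone.
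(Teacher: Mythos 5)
Your proposal follows essentially the same route as the paper's proof: split the left-hand side of Lemma \ref{lemma_1} into the dual-gap term and the dual-residual term, telescope each using the dual update $\lambda_i^k=\lambda_i^{k-1}+\rho(y_i^k-y^k)$ and the zero-sum property $\sum_i\lambda_i^k=\sum_i{\lambda_i^\star}^k=0$, shift the inner optimum from iteration $k$ to $k-1$ via the KKT conditions and the $L_q$-Lipschitz continuity of $\nabla Q$, and absorb the residual slack through the condition on $\alpha$. The one point your sketch glosses over is that absorbing the leftover $\|y^{k}-y^{k-1}\|^2$ residuals into the $-2\alpha\sum_i\|y_i^k-{y^\star}^k\|^2$ term requires the auxiliary assumption $\|y^{k}-y^{k-1}\|^2\leq\|y^{k}-{y^\star}^k\|^2$ (which the paper invokes, asserting it holds for sufficiently large $\rho$); this is precisely where the $2.5\rho$ and part of the $8L_q^2n/\rho$ in \eqref{alphaCondition} originate.
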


%
%
The proof of Lemma \ref{lemma_2} is provided in Appendix \ref{proof_lemma_2}. 
Now, we are ready to state the main theoretical result of this work. To proceed towards the main theorem, we define Lyapunov function $V^k$ as
\begin{align}
V^k\!:=&\frac{1}{\rho} \sum_{i=1}^n \|\lambda_i^k- {\lambda_i^\star}^k\|^2+2\beta_1 \sum_{i=1}^n \|y_i^{k}-{y^\star}^k\|^2 
\nonumber
\\
&+ \rho n \|y^{k}-{y^\star}^k \|^2+2\rho n \|y^{k} - y^{k-1}\|^2,\label{Lyapunov}
\end{align}
which quantifies the distance to the optimal for the dual variable $\lambda_i^k$, and the primal variable $y^k$. We present the main result in Theorem \ref{theorem_1}. 

\begin{theorem}\label{theorem_1}
With $\alpha$ that satisfies \eqref{alphaCondition} for $\beta_1\geq \frac{L_q^2}{\rho}$, $\rho\geq 2L_q$, and $\beta_2 > 0$, the sequence of iterates of {\ours} (cf. Algorithm \ref{alg:FedNew}) are such that {$\lambda_i^k \rightarrow {\lambda_i^\star}^k$, $y_i^{k} \rightarrow {y^\star}^k$, and $y^{k} \rightarrow {y^\star}^k$ as $k \rightarrow \infty$.}
\end{theorem}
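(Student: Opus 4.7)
The plan is to read Lemma~\ref{lemma_2} as a one-step descent inequality for the Lyapunov function $V^k$ in \eqref{Lyapunov}, with the parameter choices of Theorem~\ref{theorem_1} picked precisely so that every term on the right-hand side of \eqref{lemma2Result} is dominated by the matching term in $V^{k-1}$. Concretely, I would match term by term: $\tfrac{1}{\rho}\sum_i\|\lambda_i^{k-1}-{\lambda_i^\star}^{k-1}\|^2$ and $2\rho n\|y^{k-1}-y^{k-2}\|^2$ are already identical on both sides; the coefficient $\tfrac{2L_q^2}{\rho}$ of $\sum_i\|y_i^{k-1}-{y^\star}^{k-1}\|^2$ is dominated by $2\beta_1$ exactly under $\beta_1\geq L_q^2/\rho$; and $\tfrac{4L_q^2 n}{\rho}\|y^{k-1}-{y^\star}^{k-1}\|^2$ is dominated by $\rho n\|y^{k-1}-{y^\star}^{k-1}\|^2$ exactly when $\rho\geq 2L_q$. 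Assembling these bounds collapses \eqref{lemma2Result} to the clean descent
\[
V^k \;\leq\; V^{k-1} \;-\; 2\beta_2\sum_{i=1}^n \|y_i^{k}-{y^\star}^k\|^2.
\]

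Since $V^k\geq 0$ by construction, this makes $\{V^k\}$ non-increasing and bounded below, hence convergent. Telescoping yields $\sum_{k\geq 1}\sum_i\|y_i^k-{y^\star}^k\|^2 \leq V^0/(2\beta_2) <\infty$, which forces $y_i^k\to {y^\star}^k$ for every $i$. Convergence of the averaged iterate $y^k\to {y^\star}^k$ then follows from \eqref{updatePS3} together with the primal feasibility ${y_i^\star}^k={y^\star}^k$ in \eqref{primal_feasiblity} via Jensen's inequality, while $\|y^k-y^{k-1}\|\to 0$ (the inner dual residual $s^k\to 0$) drops out because $V^k$ converges and every other summand in its definition already tends to zero.

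For the remaining dual convergence $\lambda_i^k\to{\lambda_i^\star}^k$, the plan is to combine the first-order optimality condition of the subproblem \eqref{updateLocal}, namely $(H_i^k+\alpha I)y_i^k + \lambda_i^{k-1} - g_i^k + \rho(y_i^k-y^{k-1})=0$, with the dual update \eqref{updateDual}, so as to write $\lambda_i^k = g_i^k - (H_i^k+\alpha I)y_i^k - s^k$; subtracting the stationary condition \eqref{dual_feasibility} then gives $\lambda_i^k - {\lambda_i^\star}^k = (H_i^k+\alpha I)({y_i^\star}^k - y_i^k) - s^k$, and the Lipschitz bound on $\nabla^2 f_i$ together with $y_i^k\to{y^\star}^k$ and $s^k\to 0$ finishes the proof. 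The main conceptual obstacle, essentially discharged already by Lemma~\ref{lemma_2}, is that the targets ${y^\star}^k$ and ${\lambda_i^\star}^k$ drift with $k$ because $x^k$ moves at every outer Newton step; the Lyapunov construction compensates for that drift through the $L_q$-Lipschitzness of $\nabla Q_i$, which is what produces the coupling coefficients $\tfrac{2L_q^2}{\rho}$ and $\tfrac{4L_q^2 n}{\rho}$ in \eqref{lemma2Result}. The remaining work is bookkeeping: checking that \eqref{alphaCondition} leaves room for a strictly positive $\beta_2$ so the $-2\beta_2$ residual genuinely survives, which is guaranteed once $\alpha$ exceeds $2.5\rho + 8L_q^2 n/\rho + \beta_1$.
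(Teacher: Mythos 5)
Your overall route is the paper's route: you verify term by term that, under $\beta_1\geq L_q^2/\rho$ and $\rho\geq 2L_q$, the right-hand side of Lemma~\ref{lemma_2} is dominated by $V^{k-1}$, collapse \eqref{lemma2Result} to $V^k\leq V^{k-1}-2\beta_2\sum_i\|y_i^k-{y^\star}^k\|^2$, and then extract convergence. Your treatment of the primal variables is in fact \emph{more} careful than the paper's: the telescoping bound $\sum_k\sum_i\|y_i^k-{y^\star}^k\|^2\leq V^0/(2\beta_2)$ correctly forces $y_i^k\to{y^\star}^k$ and hence $y^k\to{y^\star}^k$ via \eqref{updatePS3} and \eqref{primal_feasiblity}, whereas the paper jumps from ``$V^k$ is nonnegative and decreasing'' to ``every term in $V^k$ goes to zero,'' which is a non sequitur (a nonnegative decreasing sequence converges to its infimum, not necessarily to zero).

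There is, however, a genuine gap in your argument for the dual variables, and it is the same soft spot the paper papers over. You justify $s^k=\rho(y^k-y^{k-1})\to 0$ by saying that $V^k$ converges and ``every other summand in its definition already tends to zero''---but one of those other summands is $\tfrac{1}{\rho}\sum_i\|\lambda_i^k-{\lambda_i^\star}^k\|^2$, whose vanishing you then derive \emph{from} $s^k\to 0$ via the identity $\lambda_i^k-{\lambda_i^\star}^k=(H_i^k+\alpha I)({y_i^\star}^k-y_i^k)-s^k$. That is circular. Nor can $\|y^k-y^{k-1}\|\to 0$ be read off the recursion itself: the term $2\rho n\|y^k-y^{k-1}\|^2$ appears with the same coefficient on both sides of \eqref{lemma2Result} (it telescopes rather than decays), and writing $y^k-y^{k-1}=(y^k-{y^\star}^k)-(y^{k-1}-{y^\star}^{k-1})+({y^\star}^k-{y^\star}^{k-1})$ does not finish the job either, because the target ${y^\star}^k$ drifts with $x^k$ and $x^k-x^{k-1}=-y^{k-1}$ is not yet known to vanish. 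So your argument rigorously delivers $y_i^k\to{y^\star}^k$ and $y^k\to{y^\star}^k$, but the claim $\lambda_i^k\to{\lambda_i^\star}^k$ needs an independent proof that $s^k\to 0$ (or a strengthened Lyapunov inequality with a strictly negative multiple of $\|y^k-y^{k-1}\|^2$ surviving on the right), which neither you nor the paper supplies. A minor additional point: in the last step you only need boundedness of $H_i^k+\alpha I$ (from the $L$-Lipschitz gradient), not Lipschitz continuity of the Hessian.
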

%
%
\begin{proof}
The detailed proof of Theorem \ref{theorem_1} is provided in Appendix \ref{proof_of_theorem}. To prove the asymptotic convergence of the proposed algorithm, we utilize the results of Lemma \ref{lemma_1} and Lemma \ref{lemma_2}, and show that the Lyapunov function $V^k$ is monotonically decreasing for each $k$. Subsequently, we utilize the Monotone Convergence Theorem to claim that $V^k$ converges to zero as $k\rightarrow \infty$. This further implies that $\lambda_i^k \rightarrow {\lambda_i^\star}^k$, $y_i^{k} \rightarrow {y^\star}^k$, and $y^{k} \rightarrow {y^\star}^k$ as $k \rightarrow \infty$.
\end{proof}
%
%
%
%
{We remark that even though we obtain the approximate inexact Newton directions via approximating the solution of the distributed optimization problem formulated in \eqref{decentralizedAdmm} at each iteration, we will asymptotically move in the inexact Newton direction, which is a descent direction. One can utilize the global convergence analysis of inexact Newton methods in \cite{karimireddy2018global,marteau2019globally} to further establish the global convergence of FedNew and we leave that to future scope of this work. } Finally, it is worth mentioning that replacing $H_i^k$ with $H_i^0$ in the formulation \eqref{decentralizedAdmm} and Algorithm \ref{alg:FedNew} does not change the results of Lemmas \ref{lemma_1} and Lemma \ref{lemma_2} since they both require positive definiteness of the Hessian, which already is a property of $H_i^0$. With this, it is easy to show that Theorem \ref{theorem_1} result also holds for the computation-efficient implementation of {\ours}.


\if0

We start by considering the Newton Zero update to minimize the convex function $f(x)$ given by
\begin{align}\label{main_1}
	x^{k+1}=x^k-(\nabla^2 f(x^0))^{-1} \nabla f(x^k),
\end{align}
where $(\nabla^2 f(x^0))^{-1} \nabla f(x^k)$ denotes the direction in which we want to update our current estimate $x^k$. Different methods in the literature proposed various techniques to estimate this direction. We propose to take a different novel route here and define our current direction as
$y^\star(x^k)$ given by
\begin{align}\label{innerProb_0}
	y^\star(x^k) = \argmin_{y \in R^d}  \frac{1}{2}y^T \nabla^2 f({x^0}) y - y^T \nabla f(x^k).
\end{align}
The update equation in \eqref{main_1} gets modified to 
\begin{align}\label{update}
	x^{k+1}=x^k-	\alpha y^\star(x^k).
\end{align}
From \eqref{innerProb_0}, we note that the optimal solution $y^\star(x^k)= (\nabla^2 f(x^0))^{-1} \nabla f(x^k)$. But the challenge here is to obtain the optimal solution $y^\star(x^k)$ at each $k$ which is computationally expensive and requires lots of communications between the server and the agents. Therefore, we propose to use an estimated version $\hat y(x^k)$ (to be defined later how to choose).  We use the GD update for $\hat y(x^k)$ as follows (note that ADMM is being used currently for this step, but I am trying to do it with the simplest setting now, we will see later for ADMM). 
\begin{align}
	\hat y(x^{k+1}) = \hat y(x^{k}) - \beta (\mH^0 \hat y(x^{k})-\nabla f(x^k)). 
\end{align}Let us rewrite the update in  \eqref{update} as 
\begin{align}
	x^{k+1}=x^k- \alpha\hat y(x^k).
\end{align}

Next, we derive recurrence relation for $\|x^k-x^\star\|^2$  as follows
\begin{align}
	\|x^{k+1} - x^\star\|^2
	=&   \left\|x^k-x^\star - 	\alpha\hat y(x^k) \right\|^2. \notag \\
\end{align}%
\fi

\section{Privacy Analysis}\label{privacy}
Revealing the local gradient is vulnerable to model inversion, and reconstruction attacks \cite{fredrikson2015model,hitaj2017deep}. In addition, revealing the Hessian increases vulnerability, since more information about the local function/data is released. These attacks infer the statistical profiles of training samples and violate data privacy. Against such an adversarial inverse problem, we aim at preserving privacy defined as follows.

{\bf Definition 1 \cite{zhang2018admm}}\quad A mechanism $M:$ $M(X)$$ \rightarrow Y$ is defined to be privacy preserving if the input~$X$ cannot be uniquely derived from the output $Y$.

We treat $X$ as local gradient/Hessian to be protected, and consider $Y$ as the known information at an eavesdropper such as the PS or another client. At iteration k, under the standard Newton's method, the PS receives the gradient $g_i^k$ and the Hessian $H_i^k$ from each client $i$, thereby violating the privacy defined in Definition 1. In sharp contrast, the PS in {\ours} receives $y_i^k$, which is neither the gradient nor the Hessian. Consequently, we ensure that privacy is preserved against curious PS and against any eavesdropper with the knowledge of the updating trajectory of the variable $y$. The following theorem  formally states that {\ours} preserves the privacy of the local gradients/Hessians.
\begin{theorem}\label{privacyAnalysisTheorem1}
 At each iteration $k \geq 0$, {\ours} preserves the privacy of 
 each local gradient update $g_i^{k}$ and local zero-Hessian $H_i^k$.
\end{theorem}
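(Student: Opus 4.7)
The plan is to show directly, via Definition 1, that the mapping from the private pair $(g_i^k, H_i^k)$ to the quantity $y_i^k$ actually transmitted by client $i$ is non-injective, so that the eavesdropper or the honest-but-curious PS cannot uniquely invert it. First I would enumerate what an adversary can observe across all past and present rounds: the transmitted local iterates $\{y_i^j\}_{j\leq k}$, the global iterate $\{y^j\}_{j\leq k}$ (which the PS aggregates via \eqref{updatePS3}), and hence the dual iterates $\{\lambda_i^j\}_{j\leq k}$, since the dual recursion \eqref{updateDual} is deterministic in the observable differences $y_i^j - y^j$ once the common initialization $\lambda_i^0$ is fixed. This lets me reduce the adversary's knowledge at round $k$ to the single vector equation obtained by rearranging \eqref{updateLocal2}, namely
\begin{equation}\label{privacyConstraint}
(H_i^k + (\alpha+\rho) I)\, y_i^k \;=\; g_i^k - \lambda_i^{k-1} + \rho\, y^{k-1},
\end{equation}
whose right-hand side couples the two unknowns $g_i^k$ and $H_i^k$ through only $d$ scalar constraints.

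Next I would perform a simple degrees-of-freedom count: the unknowns are the symmetric Hessian $H_i^k\in\R^{d\times d}$ ($d(d+1)/2$ free entries) and the gradient $g_i^k\in\R^d$ ($d$ free entries), for a total of $d(d+3)/2$ unknowns, while \eqref{privacyConstraint} supplies only $d$ equations. The system is therefore massively underdetermined at every iteration $k\geq 0$, and combining constraints from all rounds $0,1,\dots,k$ adds $d$ new equations per round but also at least $d$ new unknowns (a fresh $g_i^j$ each round), so the deficit never closes; in the zeroth-Hessian variant of {\ours} the Hessian is not refreshed, but the $d(d+3)/2$ unknowns at round $0$ still exceed the $d$ constraints, and subsequent rounds again add equations and unknowns in equal measure.

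To make this count rigorous in the sense of Definition 1, I would exhibit an explicit continuum of distinct admissible pairs producing the identical observable $y_i^k$. Fix any valid pair $(\bar g_i^k,\bar H_i^k)$ consistent with \eqref{privacyConstraint} (e.g.\ the true one), pick any $v\in\R^d$ orthogonal to $y_i^k$, and let $u\in\R^d$ be arbitrary. Define the perturbed pair
\begin{equation}
\tilde H_i^k \;=\; \bar H_i^k + v y_i^{k\,\top} + y_i^k v^\top, \qquad \tilde g_i^k \;=\; \bar g_i^k + u,
\end{equation}
and offset the Hessian term by the compensating rank-one/symmetric correction needed to kill $u$, e.g.\ adding $\tfrac{u y_i^{k\,\top} + y_i^k u^\top}{\|y_i^k\|^2}$-type terms so that the left-hand side of \eqref{privacyConstraint} shifts by exactly $u$. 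A short algebraic check confirms that $\tilde H_i^k$ can be made symmetric (and, by taking $v,u$ small, arbitrarily close to $\bar H_i^k$ to preserve positive semidefiniteness on a bounded set), while $(\tilde g_i^k,\tilde H_i^k)$ satisfies \eqref{privacyConstraint} identically. Hence infinitely many pairs $(g_i^k,H_i^k)$ are consistent with the same transmission $y_i^k$, so the map $M$ in Definition 1 cannot be inverted.

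The main obstacle I anticipate is the bookkeeping across iterations: showing that the accumulated system of constraints from rounds $0,\dots,k$ does not secretly pin down $(g_i^k,H_i^k)$ once auxiliary structural priors (convexity, Lipschitz bounds, or the assumption that the Hessian is the true Hessian of some $f_i$) are imposed. I would handle this by noting that Definition 1 asks only for non-unique recoverability in $\R^d \times \mathcal{S}^d$, and that the explicit family above can be chosen within any open neighborhood of the true pair, so any reconstruction procedure using a finite number of smooth side constraints still admits a continuum of feasible answers. I would close by stating that the same argument, applied with $H_i^k$ replaced by $H_i^0$, handles the computation-efficient variant, completing the proof of Theorem \ref{privacyAnalysisTheorem1}.
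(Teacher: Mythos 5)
Your proposal reaches the same conclusion by the same underlying mechanism as the paper --- the eavesdropper's inverse problem, obtained by rearranging \eqref{updateLocal2}, is an underdetermined system --- but your version is both more careful and strictly stronger. The paper's proof counts $\lambda_i^{k-1}$ as a third unknown alongside $g_i^k$ and $H_i^k$ and stops at ``three unknowns, one equation.'' You correctly observe that this is too generous to the defender: an honest-but-curious PS that knows the initialization $\lambda_i^0$ can reconstruct every $\lambda_i^j$ exactly from the observable residuals $y_i^j-y^j$ via the deterministic recursion \eqref{updateDual}, so the dual variable should not be treated as hidden. Your proof discards that crutch and still wins, by counting degrees of freedom properly ($d(d+3)/2$ unknowns in the pair $(g_i^k,H_i^k)$ against $d$ scalar constraints per round, with each new round contributing at least as many fresh unknowns as equations) and by exhibiting a continuum of admissible pairs, which is exactly the non-unique-recoverability demanded by Definition 1. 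This buys a privacy guarantee against a strictly stronger adversary and turns the ``cannot be uniquely derived'' claim from an assertion into a verifiable statement; it also cleanly covers the zeroth-Hessian variant.

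One algebraic slip to repair in your explicit witness family: the perturbation $\tilde H_i^k=\bar H_i^k+v(y_i^k)^\top+y_i^kv^\top$ with $v\perp y_i^k$ does \emph{not} leave the product $H_i^ky_i^k$ invariant; it shifts it by $\|y_i^k\|^2v$, so on its own it would alter the transmitted $y_i^k$. The clean choice is any symmetric $S$ with $Sy_i^k=0$ (a space of dimension $d(d-1)/2$ for $d\ge 2$), which gives $(\bar g_i^k,\,\bar H_i^k+S)$ as an indistinguishable alternative with the gradient untouched; if you also want to perturb the gradient by $u$, you need $u\perp y_i^k$ so that the compensating term $\bigl(u(y_i^k)^\top+y_i^ku^\top\bigr)/\|y_i^k\|^2$ shifts the left-hand side by exactly $u$ rather than by $u$ plus a spurious component along $y_i^k$. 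With that correction (and a remark on the degenerate case $y_i^k=0$, where the equation constrains $H_i^k$ not at all), your argument is complete and subsumes the paper's.
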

\begin{proof}
The eavesdropper needs to solve \eqref{updateLocal2} with respect to $H_i^k$ and $g_i^k$. However, this single equation has three unkowns at the reciever which are $H_i^k$, $g_i^k$, and $\lambda_i^{k-1}$. Hence, the receiver, the eavesdropper cannot have a unique solution for $H_i^k$, $g_i^k$ since the number of variables $V=3$ is greater than the number of equations $E=1$. This finalizes the proof.
\end{proof}
The key point of the proof is to show that the inverse problem of an eavesdropper boils down to solving a set of equations at every iteration, in which the number of unknowns is larger than the number of equations. Therefore, each client's local gradient/Hessian cannot be uniquely derived.

\section{Quantized {\ours}}\label{squantization}
%
%
Similar to first-order methods, {\ours} allows each client to transmit a vector whose size is equivalent to the model size. At the PS, all received vectors need only to be aggregated and averaged. Therefore, in contrast to other second-order methods, the received information from all clients at the PS is not involved in any further multiplication and/or inversion. Note that such multiplication and/or inversion may make existing quantization schemes used in first-order methods no longer applicable. Hence, by utilizing this feature of {\ours}, we are able to further quantize the transmitted variable ($y_i^k$) using quantization schemes used in first-order methods in the literature \cite{QGADMM}, and numerically show the convergence of the proposed approach. With quantization, we can significantly reduce the communication overhead per iteration. We refer to the quantized version of {\ours} by Q-{\ours}.

Next, we describe the quantization procedure of Q-{\ours}. At iteration $k$,  each client $i$ quantizes the difference between $y_i^k$ and the previously quantized vector $\hat{y}_i^{k-1}$ as $y_i^k-\hat{y}_i^{k-1} = Q_i(y_i^k, \hat{y}_i^{k-1})$. The function {$Q_i(\cdot)$} is a stochastic quantization operator that depends on the quantization probability $p_{i,j}^k$ for each element $j\in\{1,2,\cdots,d\}$, and on $b_i^k$ the bits used to representing each element. We choose $p_{i,j}^k$ and $b_n^k$ as follows. 
The $j$-th element {$[\hat{y}_i^{k-1}]_j$} of the previously quantized model vector is centered at the quantization range {$2 R_i^k$} that is equally divided into $2^{b_i^k}-1$~quantization levels, yielding the quantization step size $\Delta_i^k=2 R_i^k/(2^{b_i^k}-1)$. In this coordinate, the difference between {$[y_i^k]_j$} and {$[\hat{y}_i^{k-1}]_j$} is
\begin{align}\label{quantization}
[c_i(y_i^k)]_j\!=\! \frac{1}{\Delta_i^k} \left([y_i^k]_j-[\hat{y}_i^{k-1}]_j\!+\!R_i^k\right)\!,
\end{align}
where adding $R_i^k$~ensures the non-negativity of the quantized value. Then, {$[c_i(y_i^k)]_j$} is mapped to $[q_n(y_i^k)]_j$ as
\begin{align}
\!\!\!\![q_n(y_i^k)]_j =\begin{cases}
\left\lceil [c_i(y_i^k)]_j\!\right\rceil & \text{with prob. $p_{i,j}^k$}\\[2pt]
\left\lfloor [c_i(y_i^k)]_j\!\right\rfloor & \text{with prob. $1-p_{i,j}^k$} \label{Eq:quant}
\end{cases}
\end{align}
where $\lceil\cdot \rceil$ and $\lfloor\cdot \rfloor$ are ceiling and floor functions, respectively. 
Next, we select the probability $p_{i,j}^k$ in \eqref{Eq:quant} such that the expected quantization error is $\mathbb{E}{\left[\epsilon_{i,j}^k\right]}$ is zero which implies that  
%
\begin{align}\label{mean_zero}
\nonumber &p_{i,j}^k \left( [c_i(y_i^k)]_j-\lceil[c_i(y_i^k)]_j \rceil \right)\\
 &+(1-p_{i,j}^k)  \left( [c_i(y_i^k)]_j - \lfloor[c_i(y_i^k)]_j \rfloor\right)=0.
\end{align}
Solving \eqref{mean_zero} for $p_{i,j}^k$, we obtain

\if0

\begin{align}
p_{i,j}^k = \begin{cases}\frac{ [c_i(y_i^k)]_j-\lfloor[c_i(y_i^k)]_j \rfloor}{ \lceil[c_i(y_i^k)]_j\rceil-\lfloor[c_i(y_i^k)]_j \rfloor} \ \ \   \text{ if } [c_i(y_i^k)]_j\notin {\mathbb Z}\\
	0 \ \ \   \text{ otherwise}
	\end{cases}\label{Eq:Optp0}
\end{align}
Note that $\lceil[c_i(y_i^k)]_j\rceil$ and $\lfloor[c_i(y_i^k)]_j \!\rfloor$ are the integer indices of the upper and lower quantization levels for the current value. Therefore, the denominator of \eqref{Eq:Optp0} which is the difference between these consecutive integers is always equal to 1. Hence, \eqref{Eq:Optp0} can be written as:
\fi
\begin{align}
p_{i,j}^k = \left([c_i(y_i^k)]_j-\lfloor[c_i(y_i^k)]_j \rfloor \right).
\label{Eq:Optp}
\end{align} 
The $p_{i,j}^k$ selection in \eqref{Eq:Optp} ensures that the quantization error is unbiased, yielding the quantization error variance $\mathbb{E}\left[\left(\epsilon_{i,j}^k\right)^2\right] \leq({\Delta_i^k})^2/4$ \citep{reisizadeh2019exact}. This implies that $\mathbb{E}\left[\|\epsilon_{n}^k\|^2\right]\leq d({\Delta_i^k})^2/4$. 
\if0
In addition to the above, the convergence of Q-GADMM requires non-increasing quantization step sizes over iterations, \ie $\Delta_n^k \leq \Delta_n^{k-1}$ for all $k$. To satisfy this condition, $b_n^k$ is chosen as
\begin{align}
b_n^k \geq \left\lceil \log_2\left(1 + (2^{b_n^{k-1}}-1)R_n^k/R_n^{k-1} \right) \right\rceil. \label{Eq:Optb}
\end{align}
Given $p_{i,j}^k$ in \eqref{Eq:Optp} and $b_n^k$ in \eqref{Eq:Optb}, the convergence of Q-GADMM is provided in Sec.\ref{sec:convergence}. We remark that in the numerical simulations (Sec.\ref{sec:sim}), we observe that $R_n^k$ decreases over iterations, and thus $\Delta_n^k \leq \Delta_n^{k-1}$~holds even when $b_n^k$ is fixed.
\fi
With the aforementioned stochastic quantization procedure, $b_i^k$, $R_i^k$, and $q_i(y_i^k)$ suffice to represent $\hat{y}_i^k$, where
\begin{align}
q_i(y_i^k)=( [q_i(y_i^k)]_1,[y_i^k)]_2,\cdots,[y_i^k)]_d )^\intercal,
\end{align}
are transmitted to the PS. After receiving these values at the PS, $\hat{y}_i^k$~can be reconstructed as follows:
\begin{align}
\hat{y}_i^k = \hat{y}_i^{k-1}+ \Delta_i^k q_i(y_i^k)-R_i^k\mathbf{1}.
 \label{recoverEq}
 \end{align}
\noindent In contrast to full arithmetic precision based communication which uses $32d$ bits to represent the transmitted vector, every transmission payload size of Q-{\ours} is $b_i^k d + b_R$ bits, where $b_R\leq 32$ is the required bits to represent $R_i^k$. 
\section{Experiments}\label{experiment}
In this section, we empirically investigate the performance of the proposed algorithms {\ours} and Q-{\ours} against FedGD \citep{mcmahan2017communication}, and Newton Zero \citep{FedNL} for a binary classification problem using regularized logistic regression\footnote{The code is avaialble at \url{https://github.com/aelgabli/FedNew}.}. We consider 3 variants of {\ours}. To explain these variants, we let $r$ be the update rate of the hessian matrix. We consider $r\in \{0, 0.1, 1\}$. i.e., $r=0$, reflects the case when the hessian matrix is not updated at all. Therefore, $H_i^0$ is used at each iteration similar to Newton zero. $r=0.1$ reflects the case when the hessian is updated every $10$th iteration. Finally, $r=1$ reflects the case when the hessian is updated at each iteration, so $H_i^k$ is used at iteration $k$. As we will show later in the section, updating the hessian matrix at each iteration improves the convergence speed, but at the cost of more computations. We first describe the experimental setup and then discuss the numerical results.    

\subsection{Experimental Setup}
In our experiments, we consider the regularized logistic regression problem
\begin{equation}\label{prob:log-reg}	\min\limits_{x\in\R^d}\left\{f(x)\eqdef \frac{1}{n}\sum\limits_{i=1}^n f_i(x) +\frac{\mu}{2}\|x\|^2\right\},
\end{equation}
where the local loss functions are defined as
\begin{equation}
f_i(x) = \frac{1}{m}\sum \limits_{j=1}^m\log\left(1+\exp(-b_{ij}a_{ij}^\top x)\right),
\end{equation}
$\{a_{ij},b_{ij}\}_{j\in [m]}$ forms the data samples of the $i^{\text{th}}$ client, and $\mu \geq 0$ is a regularization parameter chosen to be equal to $10^{-3}$ in all experiments conducted in this section. We use four standard datasets taken from the LibSVM library \cite{chang2011libsvm}: a1a, w7a, w8a, and phishing. More details on each dataset, including the number of independent features and the number of clients considered, are summarized in Table \ref{table1}, where $N = m \times n$ is the total number of samples. In our experiments, each dataset is evenly split between the clients. Note that we have chosen different numbers of clients for each of the datasets to show the performance of the proposed approach under various network sizes.  For Q-{\ours}, the quantization resolution is 3 bits in all experiments. In the experiments, we plot the optimality gap $f(x^k)- f(x^\star)$ as a function of the number of communication rounds or the number of communicated bits per client, where $f(x^\star)$ is the function value at the $30^{\text{th}}$ iterate of the standard Newton’s method. Finally, for each variant of our algorithm, we choose $\alpha$ and $\rho$ that give the fastest convergence in the tested range. We would like to mention that though we theoretically prove convergence for $\alpha$ that satisfies \eqref{alphaCondition}, we observe that empirically FedNew converges for any choice of $\alpha \geq 0$. 
%
%
%

\subsection{Comparison to Baselines}
In Fig. \ref{fig1}, we plot the optimality gap as a function of the number of communication rounds for {\ours} and the two baselines: FedGD and Newton Zero. Fig. \ref{fig1} shows that {\ours}-($r=1$) is the fastest to converge compared to the other algorithms in terms of the number of communication rounds, followed by {\ours}-($r=0.1$), then both Newton Zero and {\ours}-($r=0$), and finally FedGD. In conclusion, when using the updated hessian at each iteration, FedNew can achieve significant reduction in terms of the number of communication rounds, but at the cost of more computations. On the other hand, {\ours}-($r=0$) which avoids updating the hessian can match the convergence speed of Newton Zero while preserving privacy. Periodic update of the hessian ($r=0.1$) achieves a middle point. In fact, as shown in Fig. \ref{fig1}, in some of the datasets, it converges as fast as the exact hessian based FedNwe ($r=1$) while reducing the computation cost $10$ times since the hessian is computed once every $10$ iterations.    

\begin{figure*}[h]
\centering
\begin{subfigure}{.23\textwidth}
  \centering
  \includegraphics[scale=0.3]{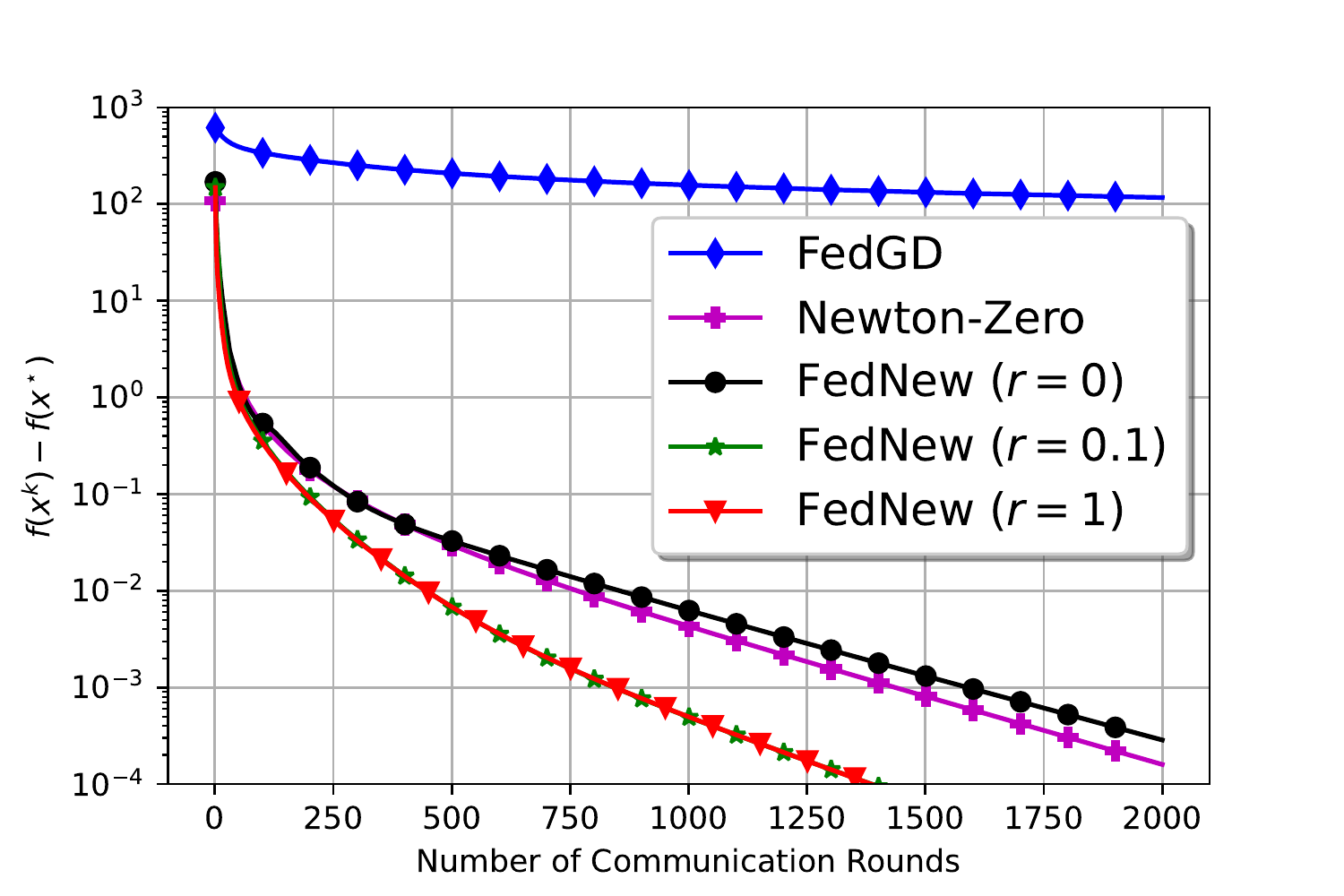}  
  \caption{Dataset: a1a}
\end{subfigure}
\begin{subfigure}{.23\textwidth}
  \centering
  \includegraphics[scale=0.3]{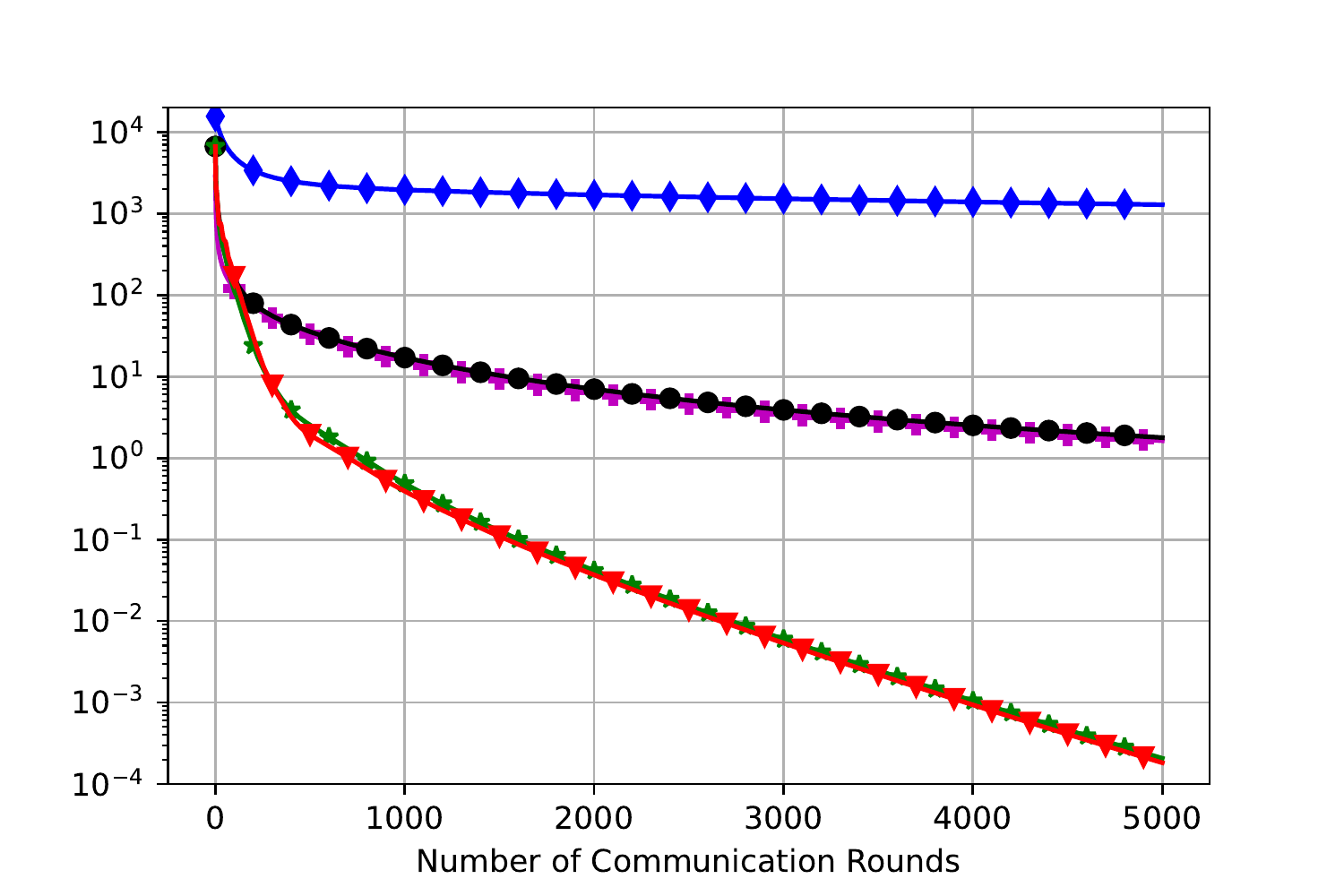}  
  \caption{Dataset: w7a}
\end{subfigure}
\begin{subfigure}{.23\textwidth}
  \centering
  \includegraphics[scale=0.3]{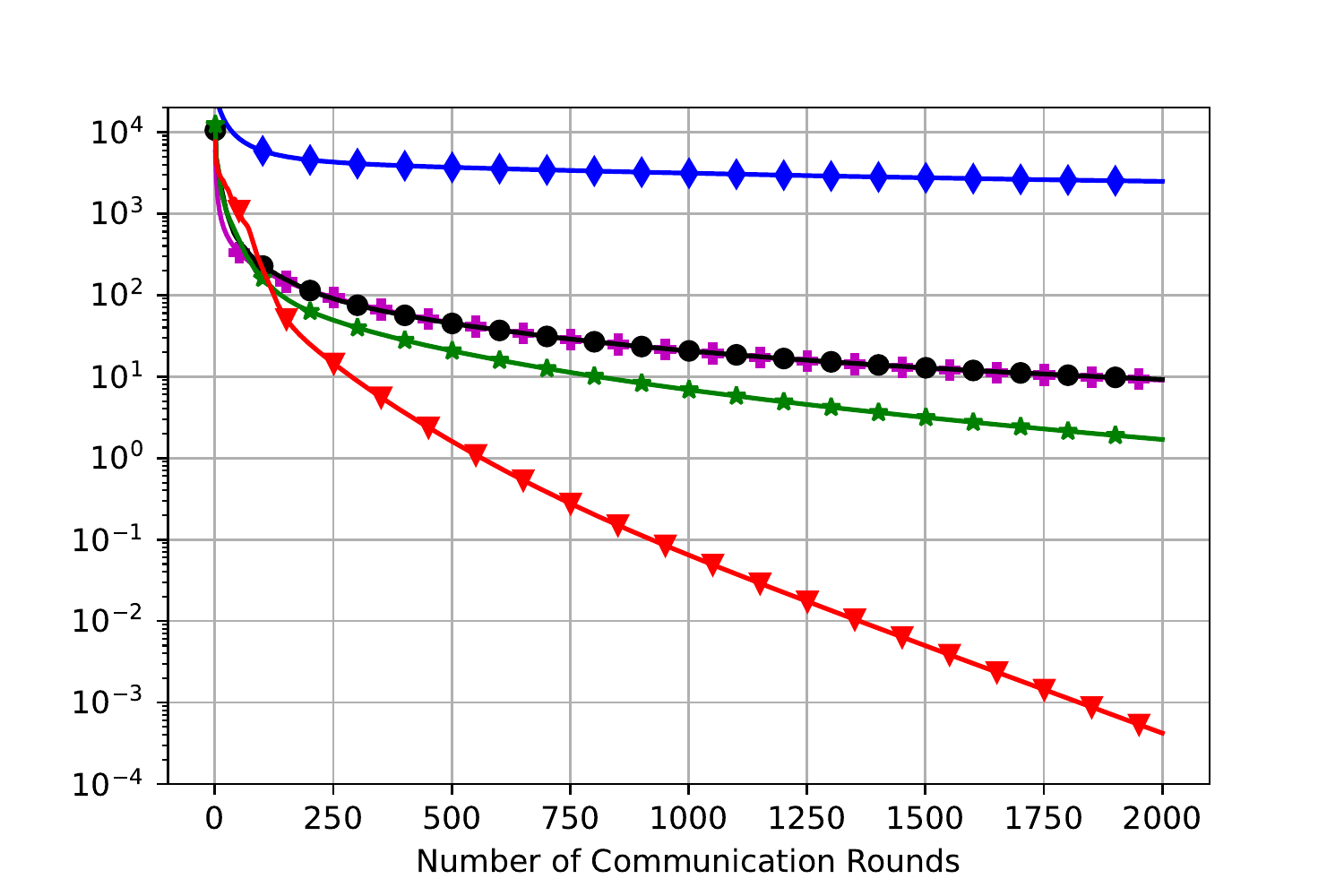} 
  \caption{Dataset: w8a}
\end{subfigure}
\begin{subfigure}{.23\textwidth}
  \centering
  \includegraphics[scale=0.3]{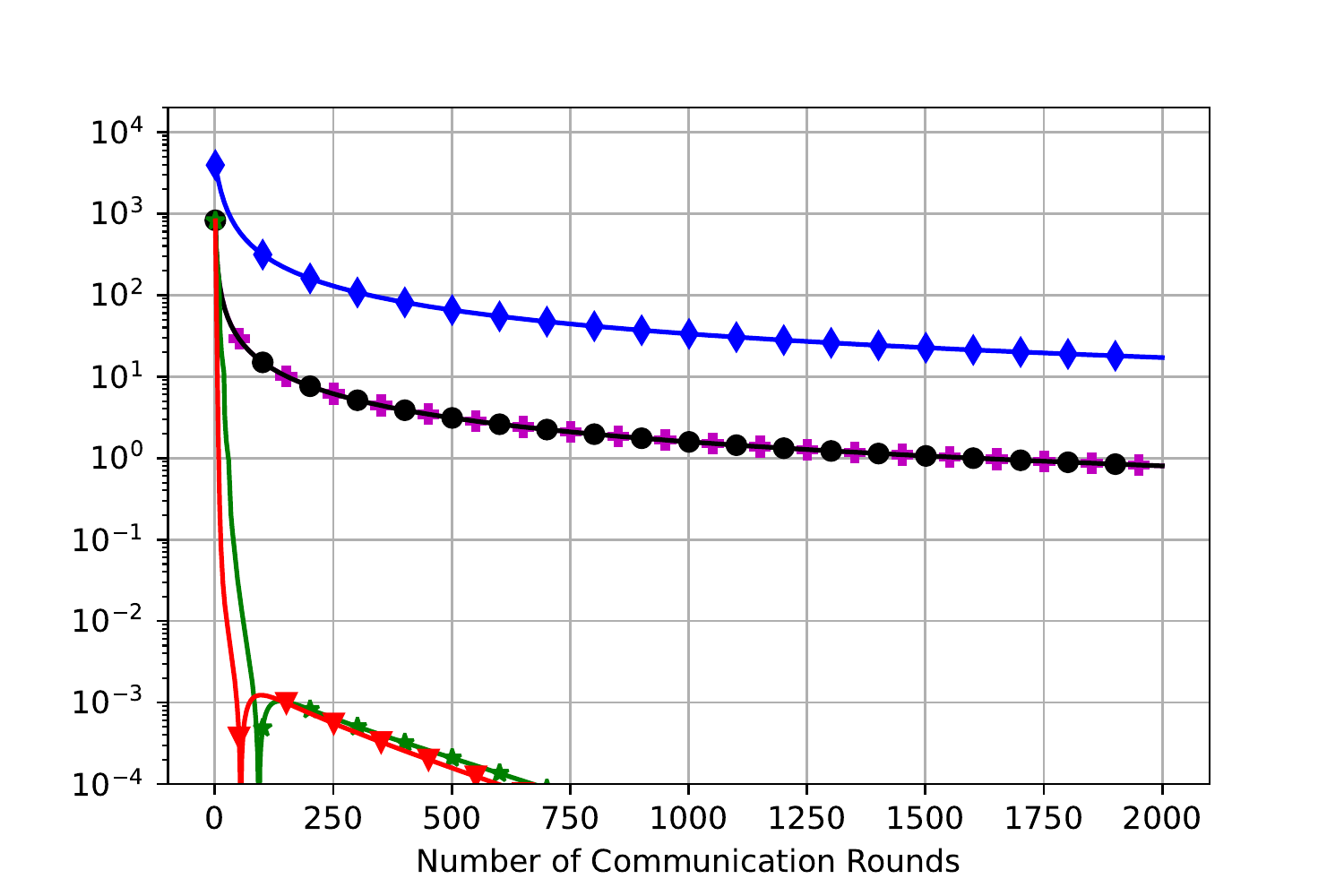} 
  \caption{Dataset: phishing}
\end{subfigure}
\caption{Optimality gap of {\ours} compared to FedGD and Newton Zero in terms of the number of communication rounds per client for different datasets. {\ours}($r=0$) require close to Newton Zero's number of communication rounds for $\epsilon$ optimality gap, yet preserves the privacy. {\ours}($r=0.1$) and {\ours}($r=1$) achieve faster convergence.}
\label{fig1}
\end{figure*}

\begin{figure*}[h]
\centering
\begin{subfigure}{.23\textwidth}
  \centering
\includegraphics[scale=0.3]{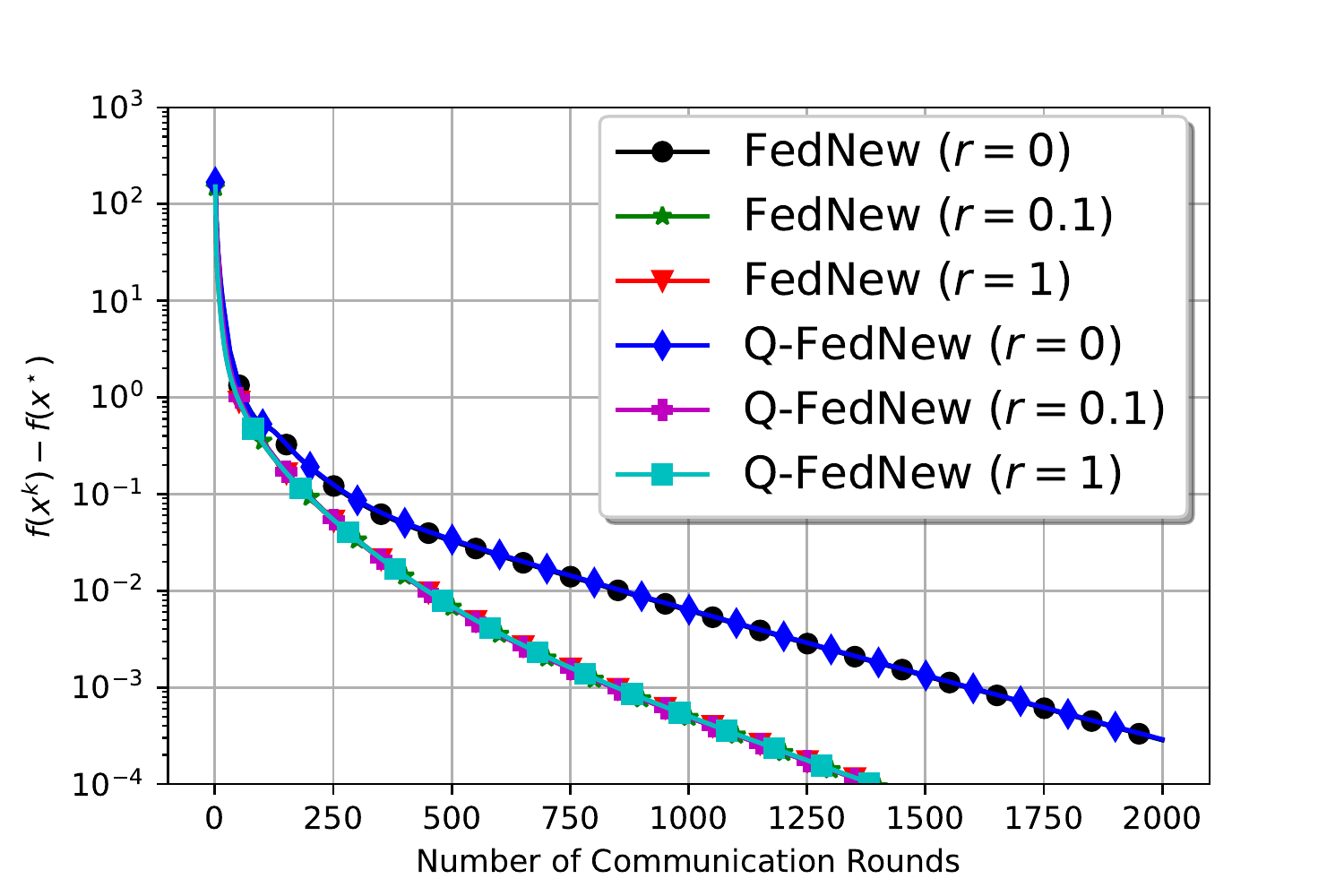} 
\end{subfigure}
\begin{subfigure}{.23\textwidth}
  \centering
\includegraphics[scale=0.3]{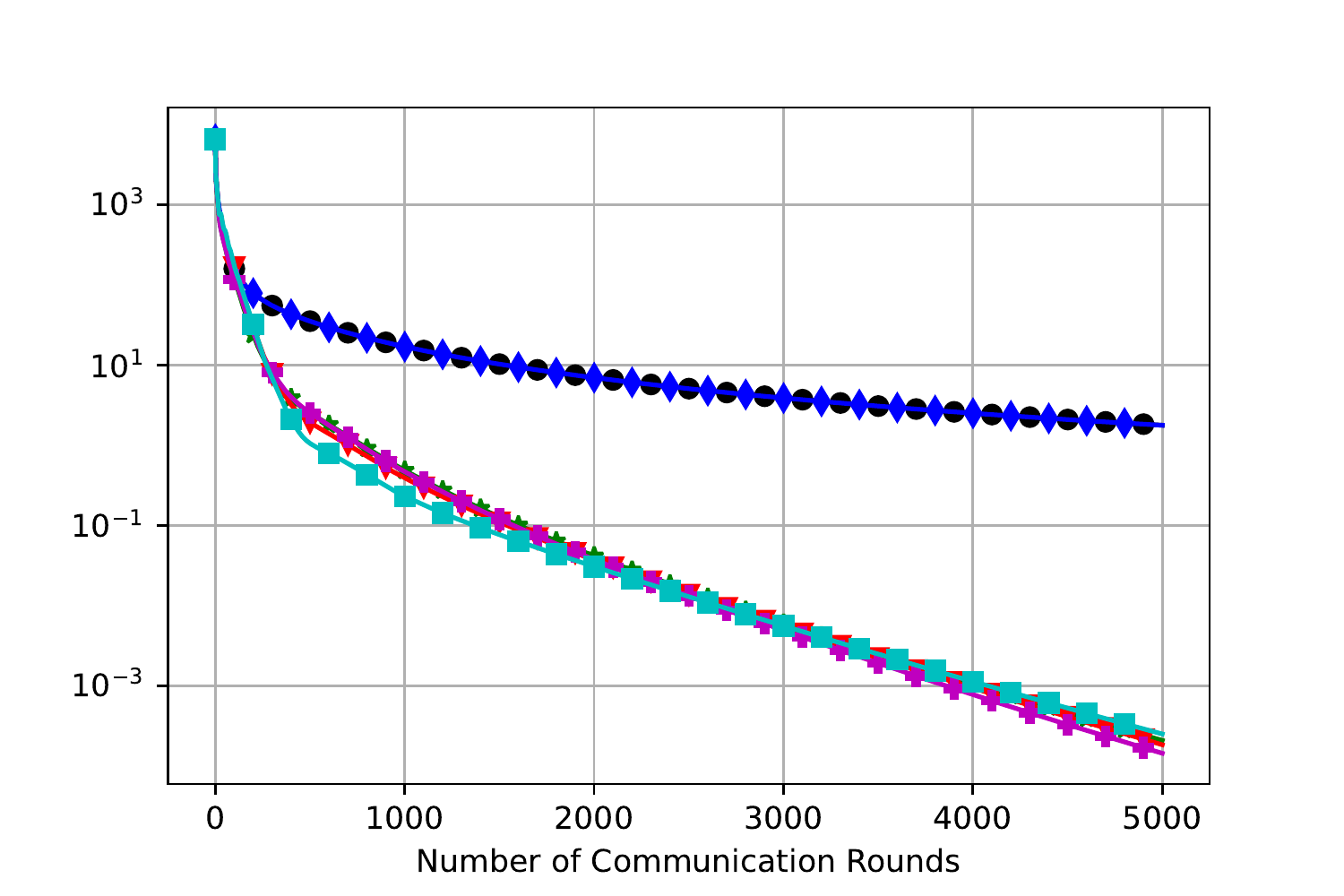} 
\end{subfigure}
\begin{subfigure}{.23\textwidth}
  \centering
\includegraphics[scale=0.3]{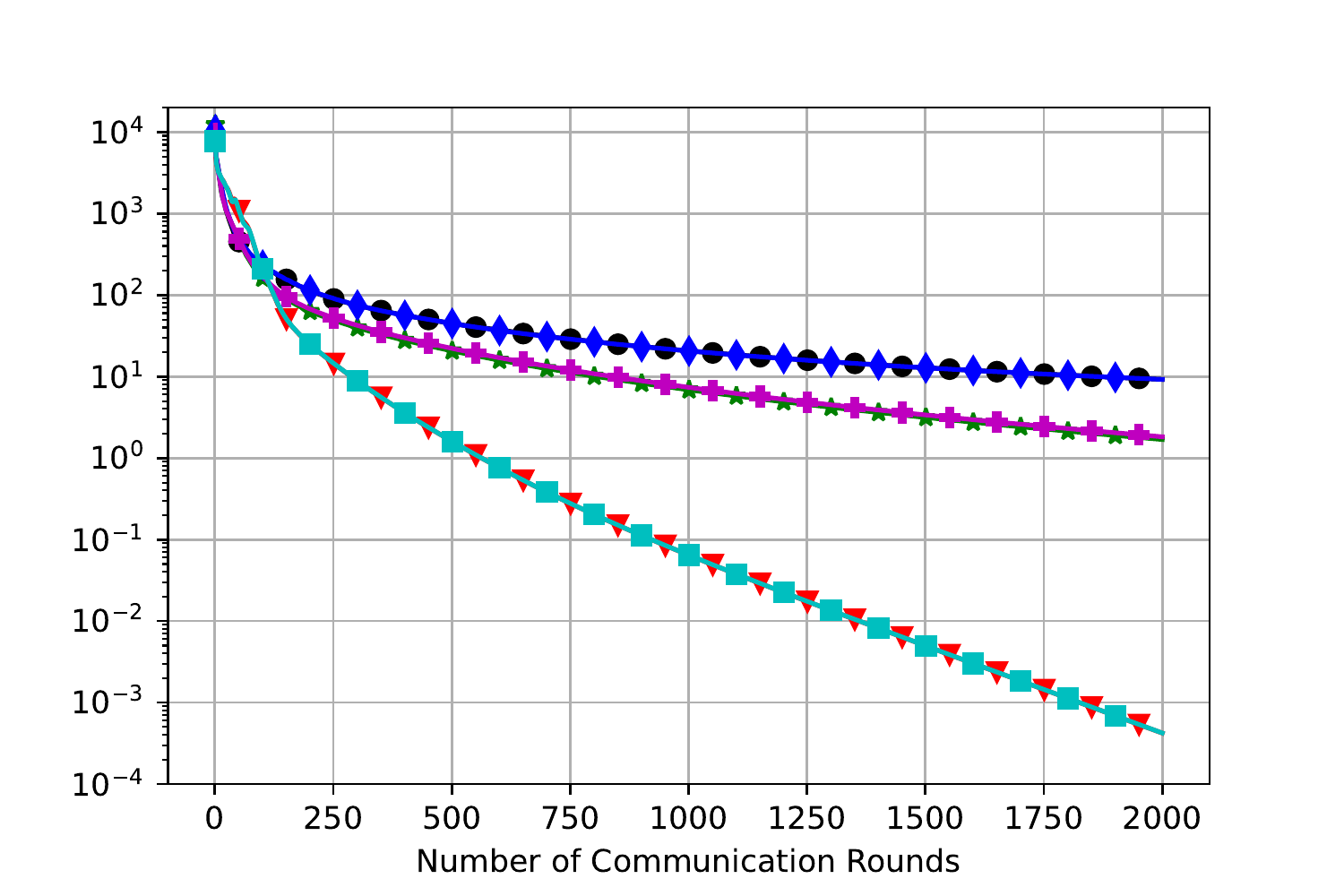} 
\end{subfigure}
\begin{subfigure}{.23\textwidth}
  \centering
\includegraphics[scale=0.3]{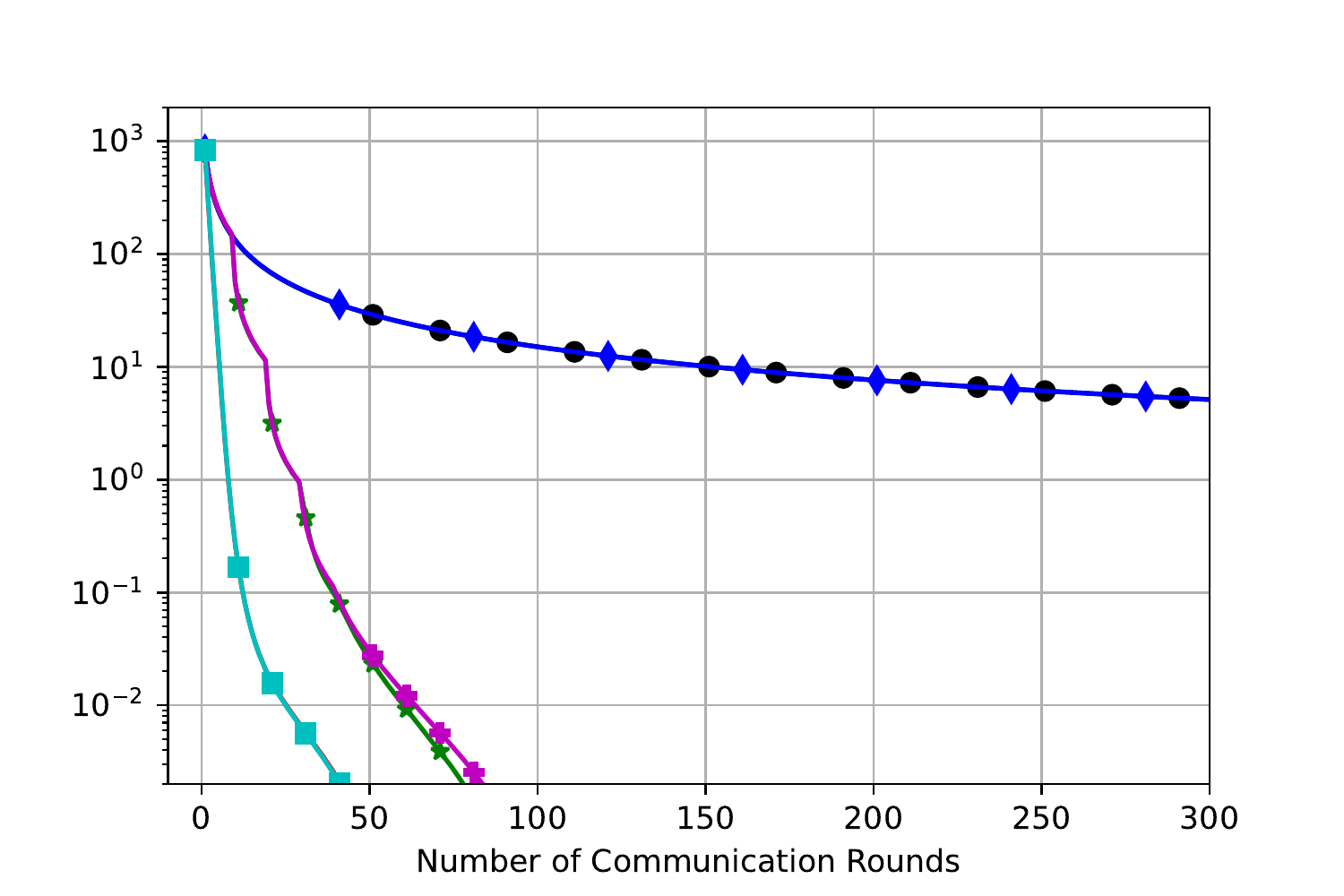} 
\end{subfigure}
\begin{subfigure}{.23\textwidth}
  \centering
\includegraphics[scale=0.3]{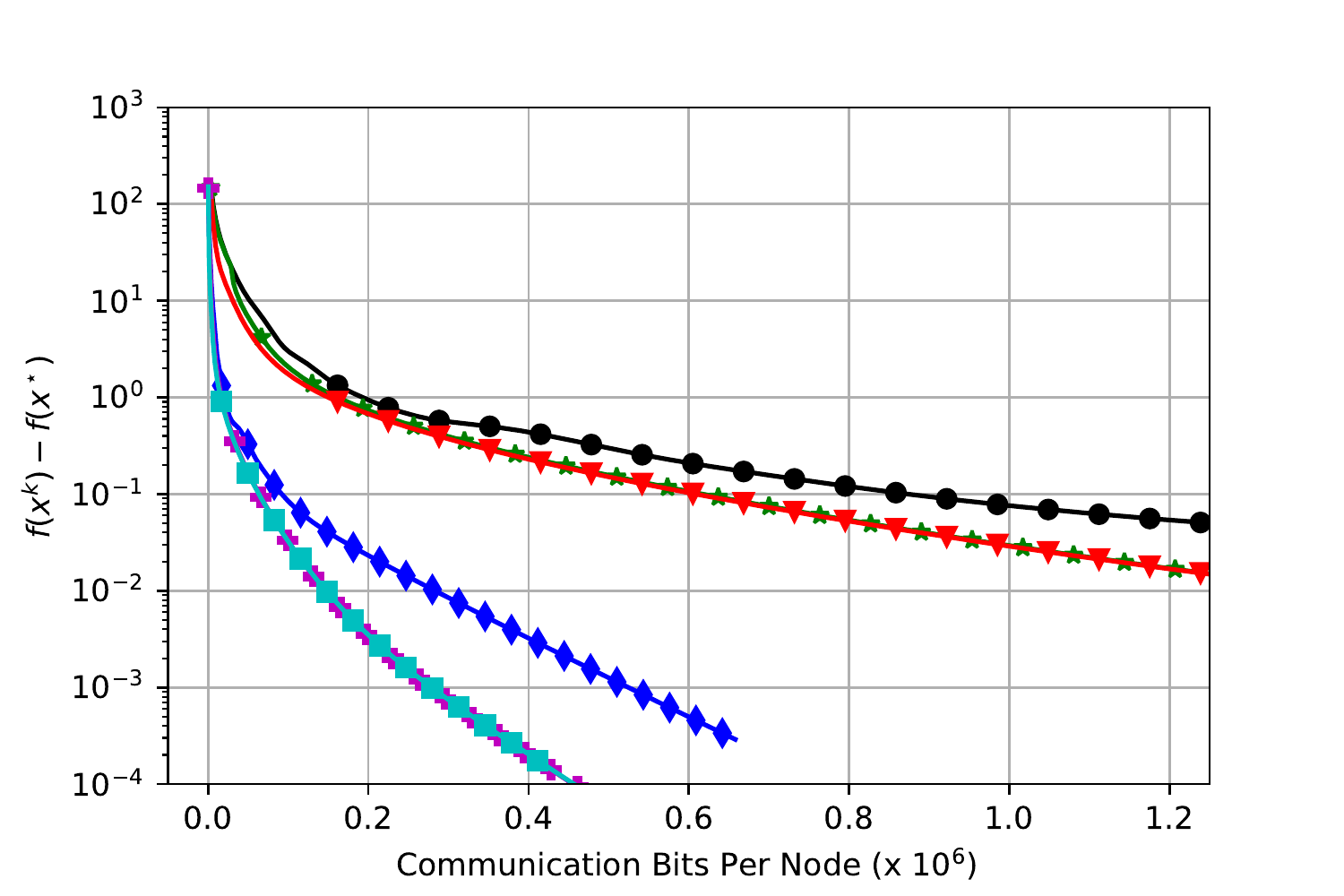} 
  \caption{Dataset: a1a}
\end{subfigure}
\begin{subfigure}{.23\textwidth}
  \centering
\includegraphics[scale=0.3]{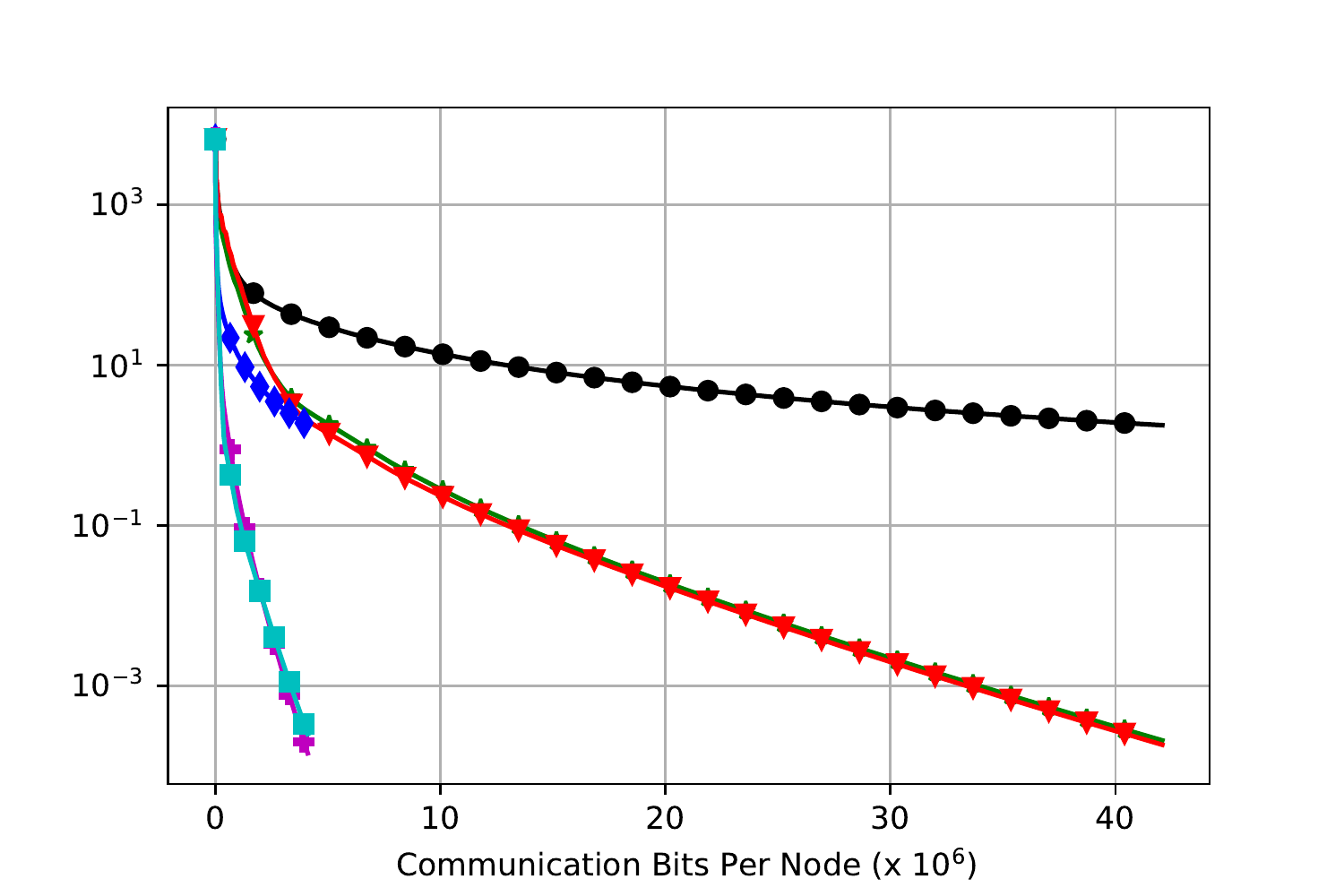} 
  \caption{Dataset: w7a}
\end{subfigure}
\begin{subfigure}{.23\textwidth}
  \centering
\includegraphics[scale=0.3]{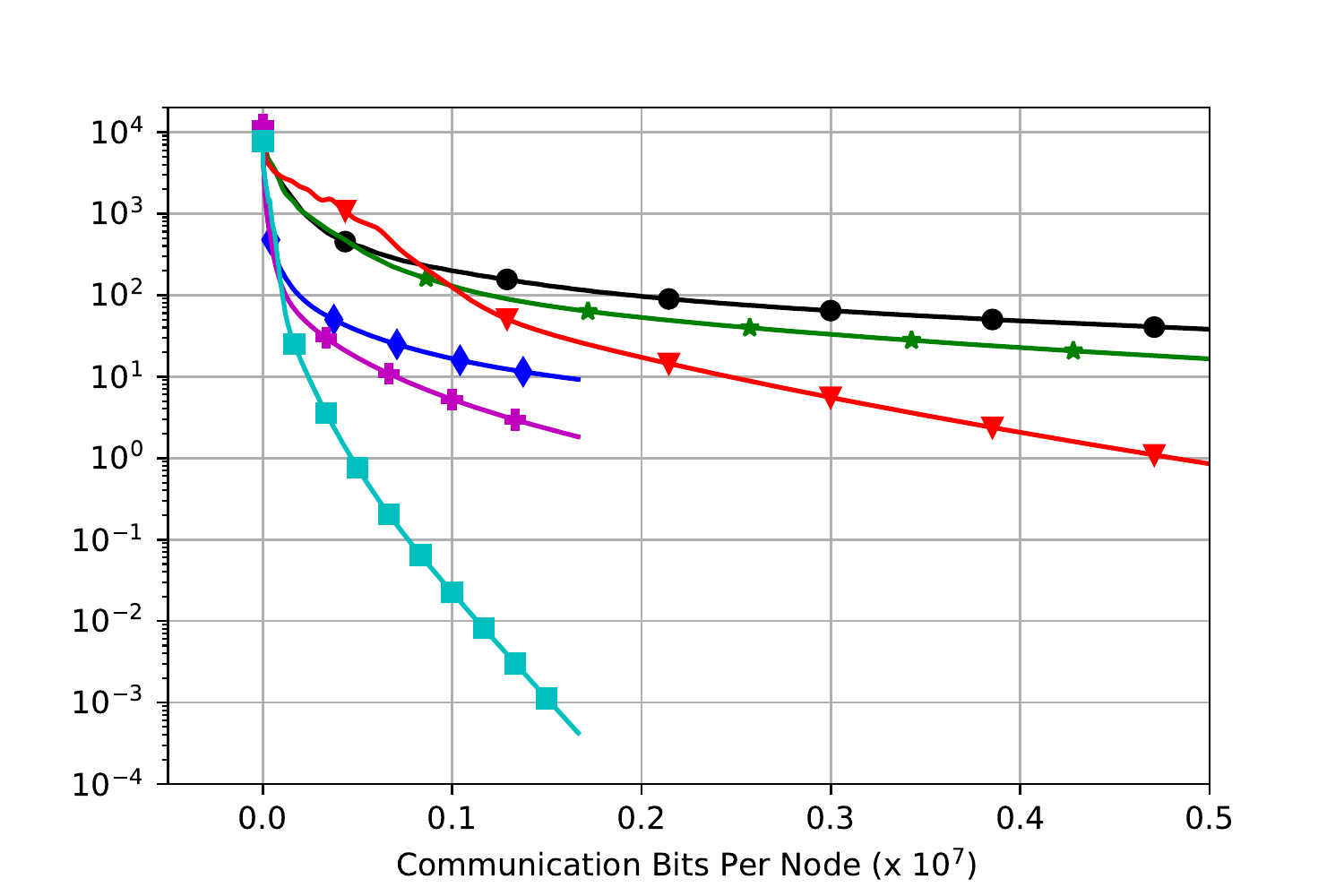} 
  \caption{Dataset: w8a}
\end{subfigure}
\begin{subfigure}{.23\textwidth}
  \centering
\includegraphics[scale=0.3]{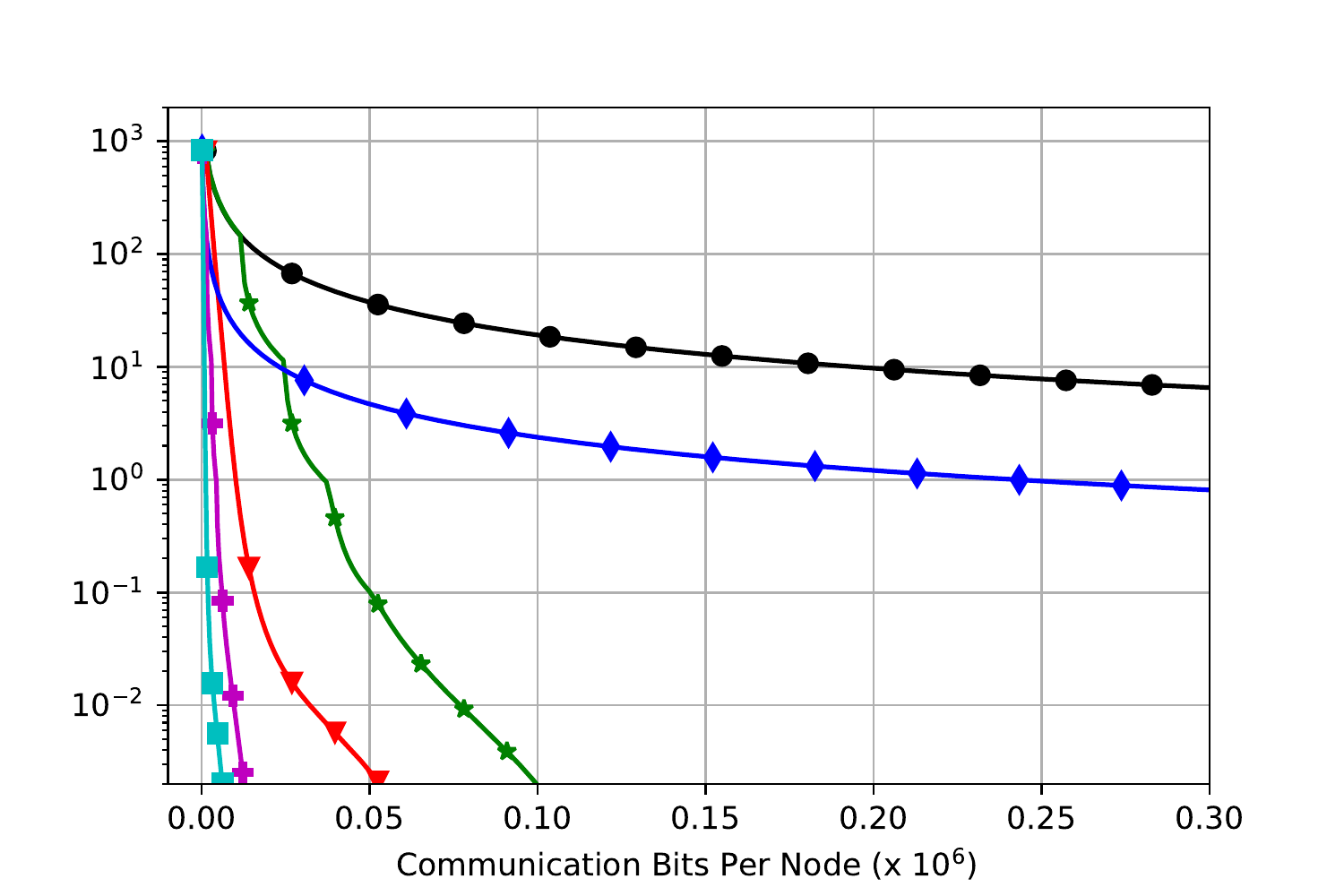} 
  \caption{Dataset: phishing}
\end{subfigure}
\caption{Optimality gap of Q-{\ours} compared to {\ours} in terms of number of communication rounds and number of communication bits per client for different datasets. Q-{\ours} converges as fast as {\ours}, but at significantly less number of transmitted bits.}
\label{fig2}
\end{figure*}

Fig. \ref{fig2} compares Q-{\ours} to {\ours}. As shown in the figure, for a fixed number of communication rounds, Q-{\ours} achieves the same optimality gap as {\ours}. However, when the optimality gap is plotted against the number of communicated bits per client, we can clearly see the significant savings in terms of the number of communicated bits per client that Q-{\ours} achieves compared to {\ours}. For example, for the dataset w8a, Q-{\ours}($r=1$) requires almost $10\times$ less number of transmitted bits compared to {\ours}($r=1$) to achieve the optimality gap of $10^{-3}$. 
%

%
\begin{table}[h]
\caption{Description of the datasets}
\label{table1}
\begin{center}
\begin{tabular}{lcccr}
\toprule
Dataset & $N$ & $m$ & $d$ & $n$ \\
\midrule
a1a    & $1600$ & $160$ & $99$ &  $10$\\
w7a    & $24640$ & $308$ & $263$ &  $80$\\
w8a    & $49700$ & $829$ & $267$ &  $60$\\
phishing& $11040$ & $276$ & $40$ &  $40$\\
\bottomrule
\end{tabular}
\end{center}
\end{table}

\if0
\begin{table}[h]
\caption{Description of the datasets}
\label{table1}
\begin{center}
\begin{tabular}{lcccr}
\toprule
Dataset & $N$ & $m$ & $d$ & $n$ \\
\midrule
a1a    & $1600$ & $100$ & $119$ &  $16$\\
a9a    & $32560$ & $407$ & $123$ &  $80$\\
w7a    & $24650$ & $493$ & $300$ &  $50$\\
w8a    & $49700$ & $350$ & $300$ &  $142$ \\
\bottomrule
\end{tabular}
\end{center}
\end{table}

\fi

Finally, we observe, from Figs. \ref{fig1} and \ref{fig2}, that Newton Zero starts at a large number of communicated bits since it requires every client to send the whole Hessian matrix at the first iteration, which consumes a large number of transmitted bits. 

\section{Conclusion and Future Work}
We proposed a novel communication-efficient, and privacy-preserving federated learning framework based on Newton and ADMM methods. Unlike existing approaches, the proposed approach ({\ours}) does not require clients to transmit their Hessian or its compressed version at any iteration. Moreover, the proposed approach ensures privacy by hiding the gradient and the Hessian information. {\ours} achieves the same communication-efficiency of first-order methods per iteration, while enjoying faster convergence and preserving privacy. In particular, the {\it inverse Hessian-gradient product} alternates between updating the {\it inverse Hessian-gradient product} using only one ADMM step at each Newton's iteration, and updating the global model using Newton's method. {{\ours} is proved to follow the inexact Newton directions asymptotically. The non-asymptotic version of the proof of optimality is left to the future scope of this work.} Furthermore, a significant reduction in communication overhead is achieved by utilizing stochastic quantization. Numerical results show the superiority of {\ours} compared to existing methods in terms of communication costs while ensuring privacy. Future works will explore the convergence analysis of the  quantized version of {\ours} (Q-{\ours}), and extend the current framework to fully decentralized topology.

\section{Acknowledgement}
This work is supported by Academy of Finland SMARTER, EU-ICT IntellIoT, EU-CHISTERA LeadingEdge, and CONNECT, Infotech-NOOR and NEGEIN.

\bibliography{main}
\bibliographystyle{plainnat}
\clearpage

\appendix

\onecolumn
\appendix
\section*{Appendices}

\if0

\section{Additional Experiments}\label{add_experiments}
In this section, we present more experiments for the a1a dataset. In Fig. \ref{fig3}, we present the optimality gap as a function of the number of communication rounds (on the left) and the communication bits per node (on the right).
\begin{figure*}[h]
\centering
\begin{subfigure}{.48\textwidth}
  \centering
\includegraphics[scale=0.50]{Figs/a1a_comm_rounds.pdf} 
\end{subfigure}
\begin{subfigure}{.48\textwidth}
  \centering
\includegraphics[scale=0.50]{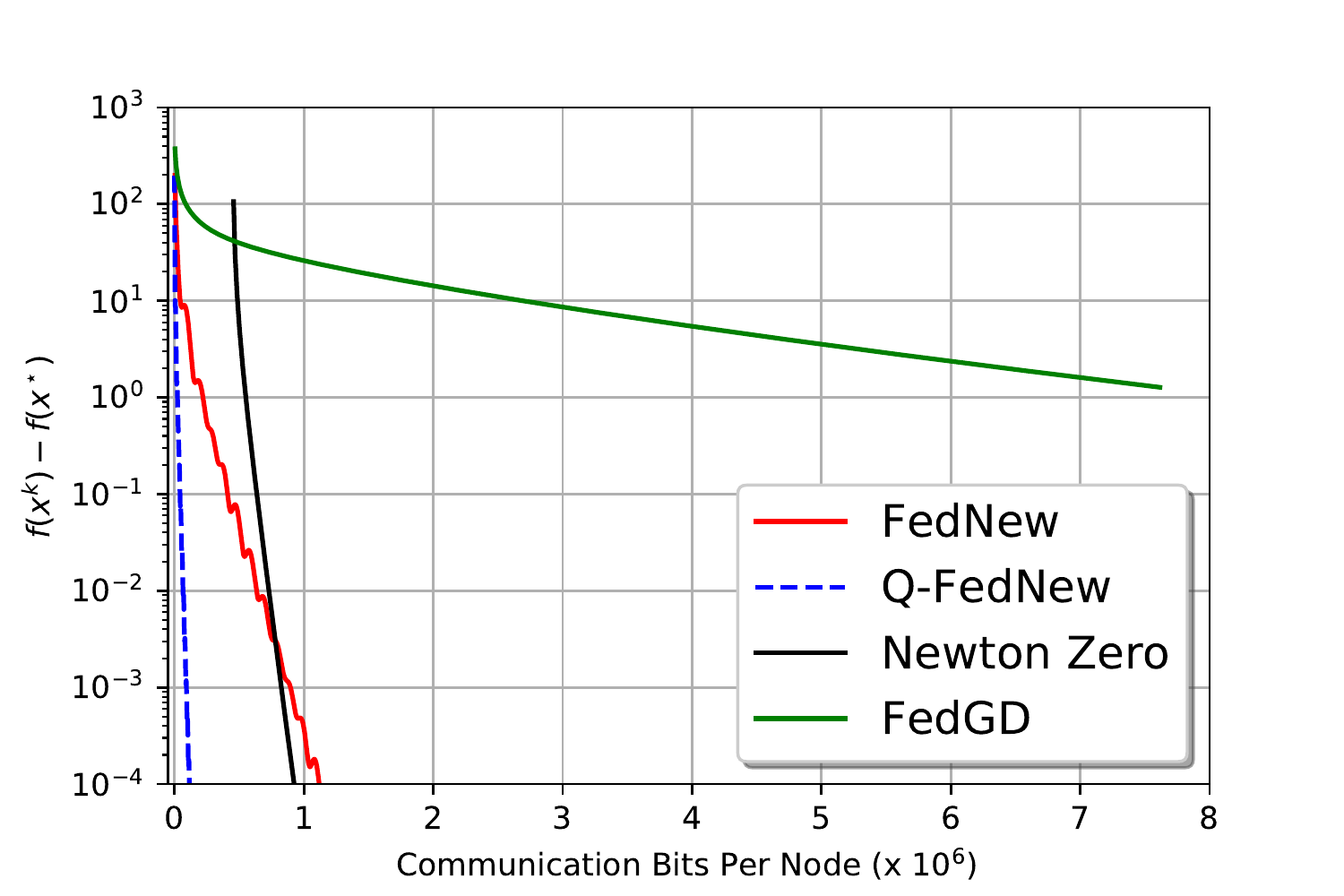} 
\end{subfigure}
\caption{Optimality gap of {\ours} and Q-{\ours} compared to Newton Zero in terms of number of communication rounds and communication bits per client for a9a dataset.}
\label{fig3}
\end{figure*}
\fi
\section{Proof of Lemma \ref{lemma_1}}\label{proof_lemma_1}
At iteration $k$, and given $x^k$, we run the one step ADMM to evaluate the direction $y_i^k$ at each client $i$. From the first order optimality condition of the problem in \eqref{updateLocal}, we can write
\begin{align}\label{feasi0}
 (H_i^k+\alpha I) y_i^k-g_i^k + \lambda_i^{k-1} + \rho (y_i^k - y^{k-1}) =0.
\end{align}
We add and subtract $y^{k}$ in the left hand side in the above expression and utilize the dual udpate  $\lambda_{i}^{k}=\lambda_{i}^{k-1}+ \rho (y_{i}^{k} - y^{k})$ (cf. \eqref{updateDual})  in \eqref{feasi0} to get 
\begin{align}\label{eqo10}
(H_i^k+\alpha I) y_i^k-g_i^k + \lambda_i^k + \rho (y^{k} - y^{k-1})=0.
\end{align}
Let $s^k$ be the dual residual at iteration $k$ defined as 
\begin{equation}\label{dualResidualEq}
s^k=\rho(y^{k}-y^{k-1})
\end{equation}
Therefore, \eqref{eqo10} can be re-written as
\begin{align}\label{eqo20}
(H_i^k+\alpha I) y_i^k-g_i^k + \lambda_i^k + s^k = 0.
\end{align}
Re-arranging the terms, we get
\begin{align}\label{eqo200}
\lambda_i^k + s^k = g_i^k -(H_i^k+\alpha I) y_i^k.
\end{align}
Subtracting ${\lambda_i^\star}^k$ from both sides and using \eqref{dual_feasibility}, we obtain
\begin{align}\label{}
\lambda_i^k + s^k-{\lambda_i^\star}^k = (H_i^k+\alpha I)({y_i^\star}^k-y_i^k)
\end{align}
Multiplying both sides  with $y_i^k-{y_i^\star}^k$, we obtain,
\begin{align}\label{lemma1Result0}
\langle \lambda_i^k + s^k-{\lambda_i^\star}^k, y_i^k-{y_i^\star}^k\rangle = \langle (H_i^k+\alpha I){y_i^\star}^k-y_i^k, y_i^k-{y_i^\star}^k\rangle
\end{align}
Since $H_i^k$ is a positive semidefinite. i.e.,  $\langle H_i^k({y_i^\star}^k-y_i^k), y_i^k-{y_i^\star}^k\rangle \leq 0$, we can re-write
\eqref{lemma1Result0} as
\begin{align}\label{lemma1Result1}
\langle \lambda_i^k + s^k-{\lambda_i^\star}^k, y_i^k-{y^\star}^k\rangle \leq -\alpha \|y_i^k-{y_i^\star}^k\|^2
\end{align}
Using ${y_i^\star}^k={y^\star}^k$, and summing over all clients, and we obtain
\begin{align}\label{lemma1ResultFinal}
\sum_{i=1}^n\langle \lambda_i^k + s^k-{\lambda_i^\star}^k, y_i^k-{y^\star}^k\rangle \leq -\alpha \sum_{i=1}^n\|y_i^k-{y^\star}^k\|^2
\end{align}
That concludes the proof.
\section{Proof of Lemma \ref{lemma_2}}\label{proof_lemma_2}
We start with the statement of Lemma \ref{lemma_1} and multiply both sides of \eqref{lemma1ResultFinal} by $2$ to obtain
\begin{align}\label{main0}
2 A + 2 B \leq  -2\alpha \sum_{i=1}^n\|y_i^k-{y^\star}^k\|^2,
\end{align}
where $A := \sum_{i=1}^n \langle  \lambda_i^k -{\lambda_i^\star}^k, y_i^k-{y^\star}^k\rangle$, and $B := \sum_{i=1}^n \langle  s^k, y_i^k-{y^\star}^k \rangle$. First, we focus on the first term on the left side of \eqref{main0}.
Note that since $\sum_{i=1}^n\lambda_i^k=\sum_{i=1}^n{\lambda_i^\star}^k=\bm{0}$, term $A$ can be written as $A=\sum_{i=1}^n \langle  \lambda_i^k -{\lambda_i^\star}^k, y_i^k\rangle$. Hence, we can write
\begin{align}\label{re}
\nonumber 2 A &= 2 \sum_{i=1}^n \langle  \lambda_i^k -{\lambda_i^\star}^k, y_i^{k}\rangle   \\
\nonumber &\stackrel{\mathrm{(a)}}{=} 2 \sum_{i=1}^n \langle  \lambda_i^{k-1} + \rho y_i^{k}-\rho y^{k} -{\lambda_i^\star}^k, y_i^{k}\rangle \\
\nonumber &= 2 \sum_{i=1}^n \langle  \lambda_i^{k-1}-\rho y^{k} -{\lambda_i^\star}^k, y_i^{k}\rangle + 2 \rho \sum_{i=1}^n \|y_i^{k}\|^2 \\
\nonumber &\stackrel{\mathrm{(b)}}{=} \frac{2}{\rho} \sum_{i=1}^n \langle  \lambda_i^{k-1} -\rho y^{k} -{\lambda_i^\star}^k, \lambda_i^k - \lambda_i^{k-1}+ \rho y^k\rangle+ \rho \sum_{i=1}^n  \|y_i^{k}\|^2 + \rho \sum_{i=1}^n \|y_i^{k}\|^2\\
\nonumber &= \frac{2}{\rho} \sum_{i=1}^n \langle  \lambda_i^{k-1} -\rho y^{k} -{\lambda_i^\star}^k, \lambda_i^k - {\lambda_i^\star}^k + {\lambda_i^\star}^k - \lambda_i^{k-1}+ \rho y^k\rangle + \frac{1}{\rho} \sum_{i=1}^n \|\lambda_i^k - \lambda_i^{k-1} + \rho y^k\|^2 +\rho \sum_{i=1}^n  \|y_i^{k}\|^2 \\
\nonumber &= {- \frac{2}{\rho} \sum_{i=1}^n \|\lambda_i^{k-1} - \rho y^k - {\lambda_i^\star}^k \|^2} + \frac{2}{\rho} \sum_{i=1}^n \langle  \lambda_i^{k-1} -\rho y^{k} -{\lambda_i^\star}^k, \lambda_i^k - {\lambda_i^\star}^k \rangle\\
& + \frac{1}{\rho} \sum_{i=1}^n \|\lambda_i^k - \lambda_i^{k-1} + \rho y^k\|^2 +\rho \sum_{i=1}^n \|y_i^{k}\|^2,
\end{align}
where we used the update of the dual variables, i.e. $\lambda_i^k = \lambda_i^{k-1} + \rho y_i^{k} - \rho y^{k} $ in $\mathrm{(a)}$, and replaced $y_i^{k}$ by $(\lambda_i^k - \lambda_i^{k-1} + \rho y^k)/\rho$ in the first two terms of $\mathrm{(b)}$. Now, we use $\lambda_i^k - \lambda_i^{k-1} = (\lambda_i^k - {\lambda_i^\star}^k)-(\lambda_i^{k-1} - {\lambda_i^\star}^k)$ in the third term of \eqref{re} to write
\begin{align}\label{ee}
\nonumber &\frac{1}{\rho} \sum_{i=1}^n \|\lambda_i^k - \lambda_i^{k-1} + \rho y^k\|^2 \\
\nonumber &=  \frac{1}{\rho} \sum_{i=1}^n \|\lambda_i^k - {\lambda_i^\star}^k - \left(\lambda_i^{k-1} -  {\lambda_i^\star}^k - \rho y^k \right)\|^2\\
&= \frac{1}{\rho} \sum_{i=1}^n \|\lambda_i^k - {\lambda_i^\star}^k \|^2 + \frac{1}{\rho} \sum_{i=1}^n \|\lambda_i^{k-1} -  {\lambda_i^\star}^k - \rho y^k\|^2 - \frac{2}{\rho} \sum_{i=1}^n \langle \lambda_i^{k-1} -  {\lambda_i^\star}^k - \rho y^k  , \lambda_i^k - {\lambda_i^\star}^k\rangle.
\end{align}
Utilizing the expression in \eqref{ee} into \eqref{re}, we get
\begin{align}\label{main1_0}
2 A &= \frac{1}{\rho} \sum_{i=1}^n \|\lambda_i^k - {\lambda_i^\star}^k\|^2 - \frac{1}{\rho} \sum_{i=1}^n \|\lambda_i^{k-1} - {\lambda_i^\star}^k - \rho y^k\|^2 + \rho \sum_{i=1}^n  \|y_i^{k}\|^2\nonumber
\\
 &= \frac{1}{\rho} \sum_{i=1}^n \|\lambda_i^k - {\lambda_i^\star}^k\|^2 - \frac{1}{\rho} \sum_{i=1}^n \|\lambda_i^{k-1} - {\lambda_i^\star}^k\|^2 - \rho n \|y^k\|^2 + \rho \sum_{i=1}^n  \|y_i^{k}\|^2
\end{align}

Next, we tackle the term $B$. Replacing the dual residual $s^{k+1}$ by its definition, we get
\begin{align}
 \nonumber 2 B &= 2 \rho \sum_{i=1}^n \langle y^{k} - y^{k-1}, y_i^k-{y^\star}^k \rangle = 2 \rho \sum_{i=1}^n \langle y^{k} - y^{k-1}, y_i^{k}-y^k + y^{k} - {y^\star}^k\rangle \\
& = 2 \rho \sum_{i=1}^n \langle y^{k} - y^{k-1}, r_i^{k}\rangle + 2 \rho n \langle y^{k}  - y^{k-1}, y^{k}-{y^\star}^k\rangle,
\end{align}
where we utilized the definition $r_i^k=y_i^k-y^k$. 
Using $y^{k}-{y^\star}^k = y^{k} - y^{k-1} + y^{k-1}-{y^\star}^k$, we can write
\begin{align}
2 B = 2 \rho \sum_{i=1}^n \langle y^{k} - y^{k-1}, r_i^{k}\rangle + 2 \rho n \|y^{k} - y^{k-1}\|^2 + 2 \rho n \langle y^{k} - y^{k-1}, y^{k-1}-{y^\star}^k\rangle.
\end{align}
Next, using $y^{k}-y^{k-1} = y^{k} - {y^\star}^k -( y^{k-1}-{y^\star}^k)$, we can write
\begin{align}\label{main02}
\nonumber 2 B&= 2 \rho \sum_{i=1}^n \langle y^{k} - y^{k-1}, r_i^{k}\rangle + 2 \rho n \|y^{k} - y^{k-1}\|^2
 - 2 \rho \| y^{k-1} - {y^\star}^k\|^2 + 2 \rho n \langle y^k-{y^\star}^k, y^{k-1}-{y^\star}^k\rangle \\
\nonumber &= 2 \rho \sum_{i=1}^n \langle y^{k} - y^{k-1}, r_i^{k}\rangle + \rho n \|(y^{k} - {y^\star}^k) - (y^{k-1}-{y^\star}^k)\|^2 + \rho n \|y^{k} - y^{k-1}\|^2
- 2 \rho \| y^{k-1} - {y^\star}^k\|^2\\
&\quad+ 2 \rho n \langle y^k-{y^\star}^k, y^{k-1}-{y^\star}^k\rangle \\
\nonumber &= \rho n \|y^{k} - y^{k-1}\|^2 + \rho n \|y^{k}-{y^\star}^k \|^2-\rho n \|y^{k-1}-{y^\star}^k\|^2 + 2 \rho \sum_{i=1}^n \langle y^{k} - y^{k-1}, r_i^{k}\rangle.
\end{align}
Using the definition of $r_i^{k}$ and the equality in \eqref{updatePS3}, we have
\begin{align}\label{eqq}
\sum_{i=1}^n \langle y^{k} - y^{k-1}, r_i^{k}\rangle
 &= \langle y^{k} - y^{k-1}, \sum_{i=1}^n r_i^{k}\rangle = \langle y^{k} - y^{k-1}, \sum_{i=1}^n y_i^{k} - n y^{k}\rangle = 0.
\end{align}
Hence, from \eqref{main02}, we obtain
\begin{align}\label{main2}
2 B &= \rho n \|y^{k} - y^{k-1}\|^2 + \rho n \|y^{k}-{y^\star}^k \|^2-\rho n \|y^{k-1}-{y^\star}^k\|^2.
\end{align}
Substituting \eqref{main1_0} and \eqref{main02} into \eqref{main0} and rearrange terms, we obtain, 
\begin{align}\label{combinedAB}
&\frac{1}{\rho} \sum_{i=1}^n \|\lambda_i^k- {\lambda_i^\star}^k\|^2  + \rho n \|y^{k}-{y^\star}^k \|^2
\\
&
 \leq \frac{1}{\rho} \sum_{i=1}^n \|\lambda_i^{k-1} - {\lambda_i^\star}^k\|^2+\rho n \|y^{k-1}-{y^\star}^k\|^2-\rho n \|y^{k} - y^{k-1}\|^2+ \rho n \|y^k\|^2 - \rho \sum_{i=1}^n  \|y_i^{k}\|^2-2\alpha \sum_{i=1}^n\|y_i^k-{y^\star}^k\|^2\nonumber
\end{align}
Using $n\sum_{i=1}^n  \|y_i^{k}\|^2 \geq   \|\sum_{i=1}^n y_i^{k}\|^2$, we can easily show that $\rho \sum_{i=1}^n  \|y_i^{k}\|^2 \geq \rho n \|y^k\|^2$, so both terms cancel from the RHS of \eqref{combinedAB}, and we obtain
\begin{align}\label{}
&\frac{1}{\rho} \sum_{i=1}^n \|\lambda_i^k- {\lambda_i^\star}^k\|^2  + \rho n \|y^{k}-{y^\star}^k \|^2
\nonumber\\
&
 \leq \frac{1}{\rho} \sum_{i=1}^n \|\lambda_i^{k-1}- {\lambda_i^\star}^k\|^2+\rho n \|y^{k-1}-{y^\star}^k\|^2-\rho n \|y^{k} - y^{k-1}\|^2-2\alpha \sum_{i=1}^n \|y_i^k-{y^\star}^k\|^2.
\end{align}
Adding and subtracting $\lambda_i^{k-1}$, we get
\begin{align}\label{}
&\frac{1}{\rho} \sum_{i=1}^n \|\lambda_i^k- {\lambda_i^\star}^k\|^2  + \rho n \|y^{k}-{y^\star}^k \|^2
\nonumber\\
&
 \leq \frac{1}{\rho} \sum_{i=1}^n \|\lambda_i^{k-1} -{\lambda_i^\star}^{k-1}+{\lambda_i^\star}^{k-1}- {\lambda_i^\star}^k\|^2+\rho n \|y^{k-1}-{y^\star}^k\|^2-\rho n \|y^{k} - y^{k-1}\|^2-2\alpha \sum_{=1}^n \|y_i^k-{y^\star}^k\|^2
\end{align}
Using $\|a+b\|^2 \leq 2\|a\|^2+2\|b\|^2$, the following holds
\begin{align}\label{}
&\frac{1}{\rho} \sum_{i=1}^n \|\lambda_i^k- {\lambda_i^\star}^k\|^2  + \rho n \|y^{k}-{y^\star}^k \|^2
\\
&
 \leq \frac{2}{\rho} \sum_{i=1}^n \|\lambda_i^{k-1} -{\lambda_i^\star}^{k-1}\|^2+\frac{2}{\rho} \sum_{i=1}^n\|{\lambda_i^\star}^{k-1}- {\lambda_i^\star}^k\|^2+\rho n \|y^{k-1}-{y^\star}^k\|^2-\rho n \|y^{k} - y^{k-1}\|^2-2\alpha \sum_{=1}^n \|y_i^k-{y^\star}^k\|^2\nonumber.
\end{align}
Using ${\lambda_i^\star}^k = g_i^{k} - (H_i^{k}+\alpha^{k} I) {y_i^\star}^{k}$, we get
\begin{align}\label{eq56}
&\frac{1}{\rho} \sum_{i=1}^n \|\lambda_i^k- {\lambda_i^\star}^k\|^2  + \rho n \|y^{k}-{y^\star}^k \|^2
\nonumber\\
&
 \leq \frac{2}{\rho} \sum_{i=1}^n \|\lambda_i^{k-1} -{\lambda_i^\star}^{k-1}\|^2+\frac{2}{\rho} \sum_{i=1}^n\|g_i^{k-1} - (H_i^{k-1}+\alpha^{k-1} I) {y_i^\star}^{k-1}-g_i^{k} + (H_i^{k}+\alpha^{k} I) {y_i^\star}^{k}\|^2\nonumber\\
 &\quad+\rho n \|y^{k-1}-{y^\star}^k\|^2-\rho n \|y^{k} - y^{k-1}\|^2-2\alpha \sum_{=1}^n \|y_i^k-{y^\star}^k\|^2.
\end{align}
Using the definition of $Q_i(\cdot,\cdot)$, we write \eqref{eq56} as
\begin{align}\label{eq57}
&\frac{1}{\rho} \sum_{i=1}^n \|\lambda_i^k- {\lambda_i^\star}^k\|^2  + \rho n \|y^{k}-{y^\star}^k \|^2
\nonumber\\
&
 \leq \frac{2}{\rho} \sum_{i=1}^n \|\lambda_i^{k-1} -{\lambda_i^\star}^{k-1}\|^2+\frac{2}{\rho} \sum_{i=1}^n\|\nabla Q_i(x^{k-1}, {y_i^\star}^{k-1})-\nabla Q_i(x^k, {y_i^\star}^{k})\|^2\nonumber\\
 &\quad+\rho n \|y^{k-1}-{y^\star}^k\|^2-\rho n \|y^{k} - y^{k-1}\|^2-2\alpha \sum_{=1}^n \|y_i^k-{y^\star}^k\|^2.
\end{align}
Using the lipschitz continuity of $\nabla Q$, we obtain
\begin{align}\label{eq58}
&\frac{1}{\rho} \sum_{i=1}^n \|\lambda_i^k- {\lambda_i^\star}^k\|^2  + \rho n \|y^{k}-{y^\star}^k \|^2
\nonumber\\
\nonumber &
 \leq \frac{2}{\rho} \sum_{i=1}^n \|\lambda_i^{k-1} -{\lambda_i^\star}^{k-1}\|^2+\frac{2L_q^2}{\rho} \sum_{i=1}^n\| {y_i^\star}^{k-1}- {y_i^\star}^{k}\|^2+\rho n \|y^{k-1}-{y^\star}^k\|^2\\
 &-\rho n \|y^{k} - y^{k-1}\|^2-2\alpha \sum_{=1}^n \|y_i^k-{y^\star}^k\|^2.
\end{align}
%
%
Equivalently, we can write \eqref{eq58} as
\begin{align}\label{}
&\frac{1}{\rho} \sum_{i=1}^n \|\lambda_i^k- {\lambda_i^\star}^k\|^2  + \rho n \|y^{k}-{y^\star}^k \|^2
\nonumber\\
&
 \leq \frac{1}{\rho} \sum_{i=1}^n \|\lambda_i^{k-1} -{\lambda_i^\star}^{k-1}\|^2+\frac{1}{\rho} \sum_{i=1}^n \|g_i^{k-1} -s^{k-1}- (H_i^{k-1}+\alpha^{k-1} I) y_i^{k-1}-g_i^{k-1} + (H_i^{k-1}+\alpha^{k-1} I) {y_i^\star}^{k-1}\|^2\nonumber\\
 &\quad+\frac{2L_q^2}{\rho} \sum_{i=1}^n\| {y_i^\star}^{k-1}- {y_i^\star}^{k}\|^2+\rho n \|y^{k-1}-{y^\star}^k\|^2-\rho n \|y^{k} - y^{k-1}\|^2-2\alpha \sum_{=1}^n \|y_i^k-{y^\star}^k\|^2.
\end{align}
Using the inequality $\|a+b\|^2 \leq 2\|a\|^2+2\|b\|^2$ and the expression of $\nabla Q$, we obtain
\begin{align}\label{}
&\frac{1}{\rho} \sum_{i=1}^n \|\lambda_i^k- {\lambda_i^\star}^k\|^2  + \rho n \|y^{k}-{y^\star}^k \|^2
\nonumber\\
&
 \leq \frac{1}{\rho} \sum_{i=1}^n \|\lambda_i^{k-1} -{\lambda_i^\star}^{k-1}\|^2+\frac{2}{\rho} \sum_{i=1}^n \|\nabla Q_i(x^{k-1},y_i^{k-1})-\nabla Q_i(x^{k-1},{y_i^\star}^{k-1})\|^2+\frac{2n}{\rho}\|s^{k-1}\|^2\nonumber\\
 \nonumber &\quad+\frac{2L_q^2}{\rho} \sum_{i=1}^n\| {y_i^\star}^{k-1}- {y_i^\star}^{k}\|^2+\rho n \|y^{k-1}-{y^\star}^k\|^2-\rho n \|y^{k} - y^{k-1}\|^2-2\alpha \sum_{=1}^n \|y_i^k-{y^\star}^k\|^2\\
&
 \leq \frac{1}{\rho} \sum_{i=1}^n \|\lambda_i^{k-1} -{\lambda_i^\star}^{k-1}\|^2+\frac{2 L_q^2}{\rho}  \sum_{i=1}^n\|y_i^{k-1}-{y^\star}^{k-1}\|^2+\frac{2n}{\rho}\|s^{k-1}\|^2+\frac{2 L_q^2}{\rho}\sum_{i=1}^n\| {y^\star}^{k-1}- {y^\star}^{k}\|^2\nonumber\\
 &\quad+\rho n \|y^{k-1}-{y^\star}^k\|^2-\rho n \|y^{k} - y^{k-1}\|^2-2\alpha \sum_{=1}^n \|y_i^k-{y^\star}^k\|^2.
\end{align}
%
%
Replacing $s^{k-1}$ by its definition, we get
\begin{align}\label{}
&\frac{1}{\rho} \sum_{i=1}^n \|\lambda_i^k- {\lambda_i^\star}^k\|^2  + \rho n \|y^{k}-{y^\star}^k \|^2
\nonumber\\
&
 \leq \frac{1}{\rho} \sum_{i=1}^n \|\lambda_i^{k-1} -{\lambda_i^\star}^{k-1}\|^2+\frac{2 L_q^2}{\rho}  \sum_{i=1}^n\|y_i^{k-1}-{y^\star}^{k-1}\|^2+2n\rho\|y^{k-1}-y^{k-2}\|^2+\frac{2 L_q^2n}{\rho}\| {y^\star}^{k-1}- {y^\star}^{k}\|^2\nonumber\\
 &\quad+\rho n \|y^{k-1}-{y^\star}^{k}\|^2-\rho n \|y^{k} - y^{k-1}\|^2-2\alpha \sum_{i=1}^n \|y_i^{k}-{y^\star}^k\|^2.
\end{align}
%
%
%
Let $\alpha = \alpha_1 + \alpha_2$. Adding and subtracting $y^k$ in $\|y^{k-1}-{y^\star}^k\|^2$ and using $\|a+b\|^2 \leq 2\|a\|^2+2\|b\|^2$, we obtain
\begin{align}\label{eq62}
&\frac{1}{\rho} \sum_{i=1}^n \|\lambda_i^k- {\lambda_i^\star}^k\|^2  + \rho n \|y^{k}-{y^\star}^k \|^2
\nonumber\\
&
 \leq \frac{1}{\rho} \sum_{i=1}^n \|\lambda_i^{k-1} -{\lambda_i^\star}^{k-1}\|^2+\frac{2 L_q^2}{\rho} \sum_{i=1}^n\|y_i^{k-1}-{y^\star}^{k-1}\|^2+2\rho n\|y^{k-1}-y^{k-2}\|^2+\frac{2 L_q^2n}{\rho}\| {y^\star}^{k-1}- {y^\star}^{k}\|^2\nonumber\\
 &\quad+2\rho n \|y^k-y^{k-1}\|^2+2\rho n\|y^k-{y^\star}^k\|^2-\rho n \|y^{k} - y^{k-1}\|^2-2\alpha_1 \sum_{i=1}^n \|y_i^{k}-{y^\star}^k\|^2-2\alpha_2 \sum_{i=1}^n \|y_i^{k}-{y^\star}^k\|^2.
\end{align}
 Using $\sum_{i=1}^n \|y_i^{k}-{y^\star}^k\|^2 \geq  \|\sum_{i=1}^n(y_i^{k}-{y^\star}^k)\|^2=  n \|y^{k}-{y^\star}^k\|^2 $, we can rewrite \eqref{eq62} as
 \begin{align}\label{}
&\frac{1}{\rho} \sum_{i=1}^n \|\lambda_i^k- {\lambda_i^\star}^k\|^2  + \rho n \|y^{k}-{y^\star}^k \|^2
\nonumber\\
&
 \leq \frac{1}{\rho} \sum_{i=1}^n \|\lambda_i^{k-1} -{\lambda_i^\star}^{k-1}\|^2+\frac{2 L_q^2}{\rho}\sum_{i=1}^n \|y_i^{k-1}-{y^\star}^{k-1}\|^2+2\rho n\|y^{k-1}-y^{k-2}\|^2+\frac{2 L_q^2n}{\rho}\| {y^\star}^{k-1}- {y^\star}^{k}\|^2\nonumber\\
 &\quad+2\rho n \|y^k-y^{k-1}\|^2+2\rho n\|y^k-{y^\star}^k\|^2-\rho n \|y^{k} - y^{k-1}\|^2-2\alpha_1 n \|y^{k}-{y^\star}^k\|^2-2\alpha_2 \sum_{i=1}^n \|y_i^{k}-{y^\star}^k\|^2.
\end{align}
 Assuming $\|y^{k-1}-y^k\|^2 \leq \|y^{k}-{y^\star}^k\|^2$ which holds for sufficiently large $\rho$, and choosing $\alpha_1 \geq 2.5\rho$, we get
\begin{align}\label{}
&\frac{1}{\rho} \sum_{i=1}^n \|\lambda_i^k- {\lambda_i^\star}^k\|^2  + \rho n \|y^{k}-{y^\star}^k \|^2
\nonumber\\
&
 \leq \frac{1}{\rho} \sum_{i=1}^n \|\lambda_i^{k-1} -{\lambda_i^\star}^{k-1}\|^2+\frac{2 L_q^2}{\rho}\sum_{i=1}^n \|y_i^{k-1}-{y^\star}^{k-1}\|^2+2\rho n\|y^{k-1}-y^{k-2}\|^2+\frac{2 L_q^2n}{\rho}\| {y^\star}^{k-1}- {y^\star}^{k}\|^2\nonumber\\
 &\quad-2\rho n \|y^{k} - y^{k-1}\|^2-2\alpha_2 \sum_{i=1}^n \|y^{k}-{y^\star}^k\|^2.
\end{align}
Re-arranging the terms, we get
\begin{align}\label{}
&\frac{1}{\rho} \sum_{i=1}^n \|\lambda_i^k- {\lambda_i^\star}^k\|^2  + \rho n \|y^{k}-{y^\star}^k \|^2+2\rho n \|y^{k} - y^{k-1}\|^2
\nonumber\\
&
 \leq \frac{1}{\rho} \sum_{i=1}^n \|\lambda_i^{k-1} -{\lambda_i^\star}^{k-1}\|^2+\frac{2 L_q^2}{\rho}\sum_{i=1}^n \|y_i^{k-1}-{y^\star}^{k-1}\|^2+2\rho n\|y^{k-1}-y^{k-2}\|^2\nonumber\\
 &\quad+\frac{2 L_q^2n}{\rho}\| {y^\star}^{k-1}- {y^\star}^{k}\|^2-2\alpha_2 \sum_{i=1}^n \|y_i^{k}-{y^\star}^k\|^2.
\end{align}
Adding and subtracting $y^{k-1}$ in $\|{y^\star}^{k-1}-{y^\star}^{k}\|^2$, we get
\begin{align}\label{}
&\frac{1}{\rho} \sum_{i=1}^n \|\lambda_i^k- {\lambda_i^\star}^k\|^2  + \rho n \|y^{k}-{y^\star}^k \|^2+2\rho n \|y^{k} - y^{k-1}\|^2
\nonumber\\
&
 \leq \frac{1}{\rho} \sum_{i=1}^n \|\lambda_i^{k-1} -{\lambda_i^\star}^{k-1}\|^2+\frac{2 L_q^2 }{\rho}\sum_{i=1}^n \|y_i^{k-1}-{y^\star}^{k-1}\|^2+2\rho n\|y^{k-1}-y^{k-2}\|^2+\frac{4 L_q^2n}{\rho}\| {y^\star}^{k-1}-y^{k-1}\|^2\nonumber\\
 &\quad+\frac{4 L_q^2n}{\rho}\|y^{k-1}- {y^\star}^{k}\|^2-2\alpha_2 \sum_{i=1}^n \|y_i^{k}-{y^\star}^k\|^2.
\end{align}
Next, let $\alpha_2 = \alpha_3 + \alpha_4$. Adding and subtracting $y^{k}$ in $\|y^{k-1}-{y^\star}^{k}\|^2$, we get
\begin{align}\label{}
&\frac{1}{\rho} \sum_{i=1}^n \|\lambda_i^k- {\lambda_i^\star}^k\|^2  + \rho n \|y^{k}-{y^\star}^k \|^2+2\rho n \|y^{k} - y^{k-1}\|^2
\nonumber\\
&
 \leq \frac{1}{\rho} \sum_{i=1}^n \|\lambda_i^{k-1} -{\lambda_i^\star}^{k-1}\|^2+\frac{2 L_q^2}{\rho}\sum_{i=1}^n \|y_i^{k-1}-{y^\star}^{k-1}\|^2+2\rho n\|y^{k-1}-y^{k-2}\|^2+\frac{4 L_q^2n}{\rho}\| {y^\star}^{k-1}-y^{k-1}\|^2\nonumber\\
 &\quad+\frac{8 L_q^2n}{\rho}\|y^{k-1}-y^k\|^2+\frac{8 L_q^2n}{\rho}\| y^k-{y^\star}^{k}\|^2-2\alpha_3 n \|y^{k}-{y^\star}^k\|^2-2\alpha_4 \sum_{i=1}^n \|y_i^{k}-{y^\star}^k\|^2.
\end{align}
Choosing $\alpha_3 \geq \frac{8 L_q^2n}{\rho}$, and using the assumption $\|y^{k-1}-y^k\|^2 \leq \|y^{k}-{y^\star}^k\|^2$, we get
\begin{align}\label{eq68}
&\frac{1}{\rho} \sum_{i=1}^n \|\lambda_i^k- {\lambda_i^\star}^k\|^2  + \rho n \|y^{k}-{y^\star}^k \|^2+2\rho n \|y^{k} - y^{k-1}\|^2
\nonumber\\
&
 \leq \frac{1}{\rho} \sum_{i=1}^n \|\lambda_i^{k-1} -{\lambda_i^\star}^{k-1}\|^2+\frac{2 L_q^2}{\rho}\sum_{i=1}^n \|y_i^{k-1}-{y^\star}^{k-1}\|^2+2\rho n\|y^{k-1}-y^{k-2}\|^2+\frac{4 L_q^2n}{\rho}\| {y^\star}^{k-1}-y^{k-1}\|^2\nonumber\\
 &\quad-2\alpha_4 \sum_{i=1}^n \|y_i^{k}-{y^\star}^k\|^2.
\end{align}
Let $\alpha_4=\alpha_5+\alpha_6$, we write \eqref{eq68} as
\begin{align}\label{}
&\frac{1}{\rho} \sum_{i=1}^n \|\lambda_i^k- {\lambda_i^\star}^k\|^2  + \rho n \|y^{k}-{y^\star}^k \|^2+2\rho n \|y^{k} - y^{k-1}\|^2
\nonumber\\
&
 \leq \frac{1}{\rho} \sum_{i=1}^n \|\lambda_i^{k-1} -{\lambda_i^\star}^{k-1}\|^2+\frac{2 L_q^2}{\rho}\sum_{i=1}^n \|y_i^{k-1}-{y^\star}^{k-1}\|^2+2\rho n\|y^{k-1}-y^{k-2}\|^2+\frac{4 L_q^2n}{\rho}\|y^{k-1}-{y^\star}^{k-1}\|^2\nonumber\\
 &\quad-2\alpha_5 \sum_{i=1}^n \|y_i^{k}-{y^\star}^k\|^2-2\alpha_6 \sum_{i=1}^n \|y_i^{k}-{y^\star}^k\|^2.
\end{align}
Re-arrange terms, we obtain
\begin{align}\label{eq70}
&\frac{1}{\rho} \sum_{i=1}^n \|\lambda_i^k- {\lambda_i^\star}^k\|^2+2\alpha_5 \sum_{i=1}^n \|y_i^{k}-{y^\star}^k\|^2  + \rho n \|y^{k}-{y^\star}^k \|^2+2\rho n \|y^{k} - y^{k-1}\|^2
\nonumber\\
&
 \leq \frac{1}{\rho} \sum_{i=1}^n \|\lambda_i^{k-1} -{\lambda_i^\star}^{k-1}\|^2+\frac{2 L_q^2}{\rho}\sum_{i=1}^n \|y_i^{k-1}-{y^\star}^{k-1}\|^2+\frac{4 L_q^2n}{\rho}\|y^{k-1}-{y^\star}^{k-1}\|^2+2\rho n\|y^{k-1}-y^{k-2}\|^2\nonumber\\
 &\quad-2\alpha_6 \sum_{i=1}^n \|y_i^{k}-{y^\star}^k\|^2.
\end{align}
Let $\beta_1=\alpha_5$, and $\beta_2=\alpha_6$, Hence, we can write \eqref{eq70} as
\begin{align}\label{lemma2Result}
&\frac{1}{\rho} \sum_{i=1}^n \|\lambda_i^k- {\lambda_i^\star}^k\|^2+2\beta_1 \sum_{i=1}^n \|y_i^{k}-{y^\star}^k\|^2  + \rho n \|y^{k}-{y^\star}^k \|^2+2\rho n \|y^{k} - y^{k-1}\|^2
\nonumber\\
&
 \leq \frac{1}{\rho} \sum_{i=1}^n \|\lambda_i^{k-1} -{\lambda_i^\star}^{k-1}\|^2+\frac{2 L_q^2}{\rho}\sum_{i=1}^n \|y_i^{k-1}-{y^\star}^{k-1}\|^2+\frac{4 L_q^2n}{\rho}\|y^{k-1}-{y^\star}^{k-1}\|^2+2\rho n\|y^{k-1}-y^{k-2}\|^2\nonumber\\
 &\quad-2\beta_2 \sum_{i=1}^n \|y_i^{k}-{y^\star}^k\|^2.
\end{align}
Note that,
\begin{align}
\alpha=&\alpha_1+\alpha_2\nonumber\\
=&\alpha_1+\alpha_3+\beta_1+\beta_2\nonumber\\
\geq& 2.5 \rho + \frac{8L_q^2n}{\rho}+\beta_1+\beta_2
\end{align}
Hence,
%
$\beta_1+\beta_2\leq \alpha - 2.5 \rho - \frac{8L_q^2n}{\rho}$. 
%
That concludes the proof.

\section{Proof of Theorem \ref{theorem_1}}\label{proof_of_theorem}
From the statement of Lemma \ref{lemma_2}, we can write 
\begin{align}\label{lemma2Result2}
\frac{1}{\rho}& \sum_{i=1}^n \|\lambda_i^k- {\lambda_i^\star}^k\|^2+2\beta_1 \sum_{i=1}^n \|y_i^{k}-{y^\star}^k\|^2  + \rho n \|y^{k}-{y^\star}^k \|^2+2\rho n \|y^{k} - y^{k-1}\|^2
\nonumber\\
&
 \leq \frac{1}{\rho} \sum_{i=1}^n \|\lambda_i^{k-1} -{\lambda_i^\star}^{k-1}\|^2+\frac{2 L_q^2}{\rho}\sum_{i=1}^n \|y_i^{k-1}-{y^\star}^{k-1}\|^2
+\frac{4 L_q^2n}{\rho}\|y^{k-1}-{y^\star}^{k-1}\|^2+2\rho n\|y^{k-1}-y^{k-2}\|^2
\nonumber
\\
&\quad -2\beta_2 \sum_{i=1}^n \|y_i^{k}-{y^\star}^k\|^2.
\end{align}
Choosing $\alpha$ that satisfies \eqref{alphaCondition} for $\beta_1\geq \frac{L_q^2}{\rho}$, $\rho\geq 2L_q$, and $\beta_2 > 0$ and from the definition of Lyapunov function $V_k$ \eqref{Lyapunov}, we can write
\begin{align}\label{them1inq11}
V^k
&\leq V^{k-1}-\left(2\beta_2 \sum_{i=1}^n \|y_i^{k}-{y^\star}^k\|^2\right).
\end{align}
Therefore, $V^k$ decreases in each iteration $k$. 
Consequently, since $V_k\geq 0$, this implies that every term in $V^k$ goes to $0$, i.e. $\lambda_i^k \rightarrow {\lambda_i^\star}^k$, $y_i^{k} \rightarrow {y^\star}^k$, and $y^{k} \rightarrow {y^\star}^k$, 
which completes the proof.

\if0
\begin{align}\label{lemma2FinalInq}
  \frac{1}{\rho} \sum_{i=1}^n \|\lambda_i^k - \lambda_i^\star\|^2  +\rho n\|y^k\|^2
\leq &  \frac{1}{\rho} \sum_{i=1}^n \|\lambda_i^{k-1} - \lambda_i^\star\|^2+\rho n \|x^{k}-x^{k-1}\|^2-3\rho n \|y^{k}\|^2-\rho n \|x^{k+1}-x^{k-1}\|^2
\nonumber\\&- \rho \sum_{i=1}^n  \|y_i^{k}\|^2+\eta L^2 n \|x^k-x^\star\|^2+\frac{1}{\eta} \sum_{i=1}^n \|y_i^k\|^2.
\end{align}

Using $y^k=x^{k+1}-x^{k}$ and $y^{k-1}=x^{k}-x^{k-1}$, we get

\begin{align}\label{lemma2FinalInq}
  \frac{1}{\rho} \sum_{i=1}^n \|\lambda_i^k - \lambda_i^\star\|^2  +\rho n\|y^{k}\|^2
\leq&  \frac{1}{\rho} \sum_{i=1}^n \|\lambda_i^{k-1} - \lambda_i^\star\|^2+\rho n \|y^{k-1}\|^2-3\rho n \|y^{k}\|^2-\rho n \|x^{k+1}-x^{k-1}\|^2
\nonumber\\&- \rho \sum_{i=1}^n  \|y_i^{k}\|^2+\eta L^2 n \|x^k-x^\star\|^2+\frac{1}{\eta} \sum_{i=1}^n \|y_i^k\|^2.
\end{align}


\fi

\if0

Adding \eqref{lemma2FinalInq} and \eqref{lemm3_fin}, we get
\begin{align}\label{them1inq1}
\nonumber \frac{1}{\rho} \sum_{i=1}^n \|\lambda_i^k - \lambda_i^\star\|^2  &+\rho n \|y^{k}\|^2+\|x^{k+1} - x^\star\|^2\\
\leq& \frac{1}{\rho} \sum_{i=1}^n \|\lambda_i^{k-1} - \lambda_i^\star\|^2+\rho n \|y^{k-1}\|^2-3\rho n \|y^{k}\|^2-\rho n \|x^{k+1} - x^{k-1}\|^2- \rho \sum_{i=1}^n  \|y_i^{k}\|^2
\nonumber\\ &+\eta L^2 n \|x^k-x^\star\|^2+\frac{1}{\eta} \sum_{i=1}^n \|y_i^k\|^2 +\frac{1}{2} \|x^k-x^\star\|^2+4\| y^k\|^2 - 2\|x^{k+1}-x^{k+1}_{*}\|^2.
\end{align}
Re-arranging the terms, we get
\begin{align}\label{them1inq1}
\nonumber &\frac{1}{\rho} \sum_{i=1}^n \|\lambda_i^k - \lambda_i^\star\|^2  +\rho n \|y^{k}\|^2+\|x^{k+1} - x^\star\|^2\\
\leq &\frac{1}{\rho} \sum_{i=1}^n \|\lambda_i^{k-1} - \lambda_i^\star\|^2+\rho n \|y^{k-1}\|^2+\big(\eta L^2 n +\frac{1}{2}\big) \|x^k-x^\star\|^2-3\rho n \|y^{k}\|^2-\rho n \|x^{k+1} - x^{k-1}\|^2- \rho \sum_{i=1}^n  \|y_i^{k}\|^2
\nonumber\\ &\quad+\frac{1}{\eta} \sum_{i=1}^n \|y_i^k\|^2+4\| y^k\|^2 - 2\|x^{k+1}-x^{k+1}_{*}\|^2.
\end{align}
By setting $\eta=\frac{1}{2L^2n}$, we obtain
\begin{align}\label{them1inq1}
\nonumber \frac{1}{\rho} \sum_{i=1}^n \|\lambda_i^k - \lambda_i^\star\|^2  &\quad+\rho n \|y^{k}\|^2+\|x^{k+1} - x^\star\|^2\\
\leq &\frac{1}{\rho} \sum_{i=1}^n \|\lambda_i^{k-1} - \lambda_i^\star\|^2+\rho n \|y^{k-1}\|^2+\|x^k-x^\star\|^2-3\rho n \|y^{k}\|^2-\rho n \|x^{k+1} - x^{k-1}\|^2- \rho \sum_{i=1}^n  \|y_i^{k}\|^2
\nonumber\\ &\quad+\frac{1}{\eta} \sum_{i=1}^n \|y_i^k\|^2 +4\| y^k\|^2 - 2\|x^{k+1}-x^{k+1}_{*}\|^2.
\end{align}
By choosing $\rho > \max\{\frac{1}{\eta},\frac{4}{3n}\} = 2\max\{ L^2 n,\frac{2}{3n}\}$, we can write
\begin{align}\label{them1inq1}
\nonumber \frac{1}{\rho} \sum_{i=1}^n \|\lambda_i^k - \lambda_i^\star\|^2  &+\rho n \|y^{k}\|^2+\|x^{k+1} - x^\star\|^2\\
&\leq \frac{1}{\rho} \sum_{i=1}^n \|\lambda_i^{k-1} - \lambda_i^\star\|^2+\rho n \|y^{k-1}\|^2+\|x^k-x^\star\|^2-\rho n \|x^{k+1} - x^{k-1}\|^2 - 2\|x^{k+1}-x^{k+1}_{*}\|^2.
\end{align}
\textcolor{red}{we do not need this step now} Since $y^*=0$, we can write
\begin{align}\label{them1inq1}
\nonumber \frac{1}{\rho} \sum_{i=1}^n \|\lambda_i^k - \lambda_i^\star\|^2  &+\rho n \|y^{k}-y^*\|^2+\|x^{k+1} - x^\star\|^2\\
&\leq \frac{1}{\rho} \sum_{i=1}^n \|\lambda_i^{k-1} - \lambda_i^\star\|^2+\rho n \|y^{k-1}-y^*\|^2+\|x^k-x^\star\|^2-\rho n \|x^{k+1} - x^{k-1}\|^2 - 2\|x^{k+1}-x^{k+1}_{*}\|^2.
\end{align}
Defining the Lyapunov function $V^k$ as
\begin{align}
   V^k =  \frac{1}{\rho} \sum_{i=1}^n \|\lambda_i^k - \lambda_i^\star\|^2  +\rho n \|y^{k}-y^*\|^2+\|x^{k+1} - x^\star\|^2,
\end{align}
we can write \eqref{them1inq1} as
\begin{align}\label{them1inq11}
V^k
&\leq V^{k-1}-\left(\rho n \|x^{k+1} - x^{k-1}\|^2 + 2\|x^{k+1}-x^{k+1}_{*}\|^2\right).
\end{align}
Therefore, $V^k$ decreases in each iteration $k$. Since we know that $V^k$ is a positive sequence, then using the monotone convergence theorem, we get that $V^k \rightarrow 0$ as $k \rightarrow \infty$. Consequently, every term in $V^k$ goes to $0$, i.e. $\lambda_i^k \rightarrow \lambda_i^\star$, $y^{k} \rightarrow y^\star$, and $x^{k+1} \rightarrow x^\star$ as $k \rightarrow \infty$, which completes the proof.

\textcolor{red}{Hi Anis, I am writing the whole argument here to make it clear how (79) implies the convergence we need. You may take a look and then finalize.}

\textcolor{blue}{Therefore, $V^k$ decreases in each iteration $k$. Now after rearranging the terms in \eqref{them1inq11}, we obtain
\begin{align}\label{them1inq1122}
\left(\rho n \|x^{k+1} - x^{k-1}\|^2 + 2\|x^{k+1}-x^{k+1}_{*}\|^2\right) \leq V^{k-1}-V^k. 
\end{align}
Evaluate the telescopic sum for $k=1$ to $K$, we obtain
\begin{align}\label{them1inq11222}
\sum_{k=1}^K\left(\rho n \|x^{k+1} - x^{k-1}\|^2 + 2\|x^{k+1}-x^{k+1}_{*}\|^2\right) \leq V^{0}, 
\end{align}
where we drop the negative term $-V_{K}$ from the right-hand side. Assuming initialization is such that $V^0$ is finite, then after taking the limit $K\rightarrow\infty$ in \eqref{them1inq11222}, we obtain
\begin{align}\label{them1inq112222}
\lim_{K\rightarrow\infty}\sum_{k=1}^K\left(\rho n \|x^{k+1} - x^{k-1}\|^2 + 2\|x^{k+1}-x^{k+1}_{*}\|^2\right) \leq V^{0}.
\end{align}
Since the left-hand side in \eqref{them1inq112222} in sum of infinite positive terms which is bounded above by a constant, this implies that $\lim_{k\rightarrow\infty} \|x^{k+1} - x^{k-1}\|^2\rightarrow 0$ and  $\lim_{k\rightarrow\infty}\|x^{k+1}-x^{k+1}_{*}\|^2\rightarrow 0$. i.e., it tells that the direction converges to the optimal direction, $\lim_{k\rightarrow\infty}\|y^k-y^*(x^{k})\|^2\rightarrow 0$, and the iterate updates also converge,  $\lim_{k\rightarrow\infty} \|x^{k} - x^{k-1}\|^2\rightarrow 0$. Combining both yilds $\lim_{k\rightarrow\infty} \|x^{k} - x^{*}\|^2\rightarrow 0$, and that concludes the proof.} \textcolor{red}{The last line is not correct according to me. This implies $\|x^{k+1} - x^{k-1}\|^2$ goes to zero, not $\|x^{k} - x^{k-1}\|^2$, are they both equivalent? I don't think so, we need to explicitly prove that.}
\fi

\if0

..\\
..\\
..\\
..\\
..\\
..\\
..\\
..\\
..\\

Adding \eqref{lemma2FinalInq} and \eqref{lemma3FinalInq}, we get
\begin{align}\label{them1inq1}
\nonumber &\frac{1}{\rho} \sum_{i=1}^n \|\lambda_i^k - \lambda_i^\star\|^2  +\rho n \|x^{k+1}-x^{k}\|^2+\|x^{k+1} - x^\star\|^2\\
&\leq \frac{1}{\rho} \sum_{i=1}^n \|\lambda_i^{k-1} - \lambda_i^\star\|^2+\rho n \|x^{k}-x^{k-1}\|^2-3\rho n \|y^{k}\|^2-\rho n \|x^{k+1} - x^{k-1}\|^2- \rho \sum_{i=1}^n  \|y_i^{k}\|^2
\nonumber\\ \nonumber&+\eta L^2 n \|x^k-x^\star\|^2+\frac{1}{\eta} \sum_{i=1}^n \|y_i^k\|^2 +\frac{1}{2}\|x^k-x^\star\|^2 + \frac{2 \delta}{n}\sum_{i=1}^{n}\| {u}_i^k-{u}^\star(x^k)\|^2+  2\frac{L^2}{\mu^2}\| x^k-x^{k-1}\|^2 \\
&\leq  \frac{1}{\rho} \sum_{i=1}^n \|\lambda_i^{k-1} - \lambda_i^\star\|^2+\rho n\|x^{k}-x^{k-1}\|^2 -3\rho n \|y^{k}\|^2-\rho n\|x^{k+1} - x^{k-1}\|^2- \rho_1 \sum_{i=1}^n  \|y_i^{k}\|^2
\nonumber\\&- \rho_2 \sum_{i=1}^n  \|y_i^{k}\|^2+\eta L^2 n \|x^k-x^\star\|^2+\frac{1}{\eta} \sum_{i=1}^n \|y_i^k\|^2+\frac{1}{2}\|x^k-x^\star\|^2 + \frac{2 \delta}{n}\sum_{i=1}^{n}\| {u}_i^k-{u}^\star(x^k)\|^2+  \frac{2L^2}{\mu^2}\| x^k-x^{k-1}\|^2,
\end{align}
where $\rho_1+\rho_2=\rho$. Using the inequality $\|\sum_{i=1}^{n} y_i^k\|^2 \leq n \sum_{i=1}^{n} \| y_i^k\|^2$, we can write
\begin{align}\label{them1inq1}
\nonumber &\frac{1}{\rho} \sum_{i=1}^n \|\lambda_i^k - \lambda_i^\star\|^2  +\rho n\|x^{k+1}-x^{k}\|^2+\|x^{k+1} - x^\star\|^2\\
\nonumber&\leq \frac{1}{\rho} \sum_{i=1}^n \|\lambda_i^{k-1} - \lambda_i^\star\|^2+\rho n\|x^{k}-x^{k-1}\|^2-3\rho n \|y^{k}\|^2-\rho n\|x^{k+1} - x^{k-1}\|^2- \frac{\rho_1}{n}   \left\|\sum_{i=1}^n y_i^{k}\right\|^2
\nonumber\\ \nonumber&- \rho_2 \sum_{i=1}^n  \|y_i^{k}\|^2+\eta L^2 n \|x^k-x^\star\|^2+\frac{1}{\eta} \sum_{i=1}^n \|y_i^k\|^2+\frac{1}{2}\|x^k-x^\star\|^2 + \frac{2 \delta}{n}\sum_{i=1}^{n}\| {u}_i^k-{u}^\star(x^k)\|^2+  \frac{2L^2}{\mu^2}\| x^k-x^{k-1}\|^2\\
&\leq  \frac{1}{\rho} \sum_{i=1}^n \|\lambda_i^{k-1} - \lambda_i^\star\|^2+\rho n\|x^{k}-x^{k-1}\|^2-3\rho n \|y^{k}\|^2-\rho n\|x^{k+1} - x^{k-1}\|^2- \rho_1 n  \| y^{k}\|^2
\nonumber\\&- \rho_2 \sum_{i=1}^n  \|y_i^{k}\|^2+\eta L^2 n \|x^k-x^\star\|^2+\frac{1}{\eta} \sum_{i=1}^n \|y_i^k\|^2 +\frac{1}{2}\|x^k-x^\star\|^2 + \frac{2 \delta}{n}\sum_{i=1}^{n}\| {u}_i^k-{u}^\star(x^k)\|^2+  \frac{2L^2}{\mu^2}\| x^k-x^{k-1}\|^2.
\end{align}
Re-arranging the terms, we obtain
\begin{align}\label{them1inq1}
\nonumber &\frac{1}{\rho} \sum_{i=1}^n \|\lambda_i^k - \lambda_i^\star\|^2  + (\rho+\rho_1) n\|x^{k+1}-x^{k}\|^2+\|x^{k+1} - x^\star\|^2\\
&\leq  \frac{1}{\rho} \sum_{i=1}^n \|\lambda_i^{k-1} - \lambda_i^\star\|^2+\rho n\|x^{k}-x^{k-1}\|^2-3\rho n \|y^{k}\|^2-\rho n\|x^{k+1} - x^{k-1}\|^2- \rho_2 \sum_{i=1}^n  \|y_i^{k}\|^2
\nonumber\\&+\eta L^2 n \|x^k-x^\star\|^2+\frac{1}{\eta} \sum_{i=1}^n \|y_i^k\|^2+\frac{1}{2}\|x^k-x^\star\|^2 + \frac{2 \delta}{n}\sum_{i=1}^{n}\| {u}_i^k-{u}^\star(x^k)\|^2 +  \frac{2L^2}{\mu^2}\| x^k-x^{k-1}\|^2.
\end{align}


Choosing $\rho_2 > \frac{1}{\eta}$, we get
\begin{align}\label{them1inq1}
\nonumber &\frac{1}{\rho} \sum_{i=1}^n \|\lambda_i^k - \lambda_i^\star\|^2  + (\rho+\rho_1) n \|x^{k+1}-x^{k}\|^2+\|x^{k+1} - x^\star\|^2\\
\nonumber &\leq  \frac{1}{\rho} \sum_{i=1}^n \|\lambda_i^{k-1} - \lambda_i^\star\|^2+\rho n\|x^{k}-x^{k-1}\|^2 -3\rho n \|y^{k}\|^2-\rho n\|x^{k+1} - x^{k-1}\|^2 +\eta L^2 n \|x^k-x^\star\|^2\\
\nonumber &+\frac{1}{2}\|x^k-x^\star\|^2 + \frac{2 \delta}{n}\sum_{i=1}^{n}\| {u}_i^k-{u}^\star(x^k)\|^2 +  \frac{2L^2}{\mu^2}\| x^k-x^{k-1}\|^2 \\
&\leq  \frac{1}{\rho} \sum_{i=1}^n \|\lambda_i^{k-1} - \lambda_i^\star\|^2+\left(\rho n+\frac{2L^2}{\mu^2}\right)\|x^{k}-x^{k-1}\|^2 +\left(\eta L^2 n+\frac{1}{2}\right) \|x^k-x^\star\|^2\nonumber\\
&+ \frac{2 \delta}{n}\sum_{i=1}^{n}\| {u}_i^k-{u}^\star(x^k)\|^2-3\rho n \|y^{k}\|^2-\rho n\|x^{k+1} - x^{k-1}\|^2.
\end{align}
Choosing $\rho_1=\frac{2L^2}{n \mu^2}$, we get
\begin{align}\label{them1inq1}
&\nonumber\frac{1}{\rho} \sum_{i=1}^n \|\lambda_i^k - \lambda_i^\star\|^2  + n(\rho+\rho_1) \|x^{k+1}-x^{k}\|^2+\|x^{k+1} - x^\star\|^2
\\ &\nonumber \leq \frac{1}{\rho} \sum_{i=1}^n \|\lambda_i^{k-1} - \lambda_i^\star\|^2+ n(\rho+\rho_1)\|x^{k}-x^{k-1}\|^2 + \left(\eta L^2 n+\frac{1}{2}\right) \|x^k-x^\star\|^2+ \frac{2 \delta}{n}\sum_{i=1}^{n}\| {u}_i^k-{u}^\star(x^k)\|^2\nonumber\\
&-3\rho n \|y^{k}\|^2-\rho n\|x^{k+1} - x^{k-1}\|^2.
\end{align}
Choosing $\eta=\frac{1}{2L^2N}$, we get
\begin{align}\label{them1inq1}
\nonumber &\frac{1}{\rho} \sum_{i=1}^n \|\lambda_i^k - \lambda_i^\star\|^2  + n(\rho+\rho_1) \|x^{k+1}-x^{k}\|^2+\|x^{k+1} - x^\star\|^2 \\
&\leq  \frac{1}{\rho} \sum_{i=1}^n \|\lambda_i^{k-1} - \lambda_i^\star\|^2+n(\rho+\rho_1)\|x^{k}-x^{k-1}\|^2+\|x^k-x^\star\|^2 + \frac{2 \delta}{n}\sum_{i=1}^{n}\| {u}_i^k-{u}^\star(x^k)\|^2\nonumber\\
&-3\rho n \|y^{k}\|^2-\rho n\|x^{k+1} - x^{k-1}\|^2.
\end{align}
Next, we define a Lyapunov function $V^k$ as follows
\begin{align}
    V^k = \frac{1}{\rho} \sum_{i=1}^n \|\lambda_i^k - \lambda_i^\star\|^2  + n \left(\rho+\frac{2L^2}{n \mu^2}\right) \|x^{k+1}-x^{k}\|^2+\|x^{k+1} - x^\star\|^2.
\end{align}
Then, we can write
\begin{align}\label{them1inq1}
V^k \leq  V^{k-1}+ \frac{2 \delta}{n}\sum_{i=1}^{n}\| {u}_i^k-{u}^\star(x^k)\|^2-3\rho n \|y^{k}\|^2-\rho n\|x^{k+1} - x^{k-1}\|^2.
\end{align}

From \citep{}, under assumptions that hold for hour dynamic function \textcolor{red}{Amrit, plz check the assumptions}, we have

\begin{align}\label{them1inq1}
\sum_{i=1}^{n}\| {u}_i^k-{u}^\star(x^k)\|^2 \leq \frac{1}{\sqrt{(1+\sigma)}}\sum_{i=1}^{n}\| {u}_i^{k-1}-{u}^\star(x^{k-1})\|^2
\end{align}

\fi

\end{document}